\relax
%File: formatting-instruction.tex
\documentclass[letterpaper]{article} % DO NOT CHANGE THIS
\usepackage{aaai20}  % DO NOT CHANGE THIS
\usepackage{times}  % DO NOT CHANGE THIS
\usepackage{helvet} % DO NOT CHANGE THIS
\usepackage{courier}  % DO NOT CHANGE THIS
\usepackage[hyphens]{url}  % DO NOT CHANGE THIS
\usepackage{graphicx} % DO NOT CHANGE THIS
\urlstyle{rm} % DO NOT CHANGE THIS
  % DO NOT CHANGE THIS
\usepackage{graphicx}  % DO NOT CHANGE THIS
\frenchspacing  % DO NOT CHANGE THIS
\setlength{\pdfpagewidth}{8.5in}  % DO NOT CHANGE THIS
\setlength{\pdfpageheight}{11in}  % DO NOT CHANGE THIS
%\nocopyright
%PDF Info Is REQUIRED.
% For /Author, add all authors within the parentheses, separated by commas. No accents or commands.
% For /Title, add Title in Mixed Case. No accents or commands. Retain the parentheses.
 \pdfinfo{
/Title (Stochastic Online Learning with Probabilistic Graph Feedback)
/Author (Shuai Li, Wei Chen, Zheng Wen, Kwong-Sak Leung)
} %Leave this	

\newif\ifsup\supfalse
\suptrue

% my own
% \usepackage{natbib}
\usepackage{amsmath,amssymb,amsfonts}\allowdisplaybreaks
\usepackage{algorithm,algorithmic}
\usepackage[colorlinks=true,linkcolor=blue,urlcolor=blue,citecolor=blue]{hyperref}
\usepackage{dsfont}

\newcommand{\RR}{\mathbb{R}}

\newcommand{\cA}{\mathcal{A}}
\newcommand{\cB}{\mathcal{B}}
\newcommand{\cC}{\mathcal{C}}
\newcommand{\cD}{\mathcal{D}}
\newcommand{\cE}{\mathcal{E}}
\newcommand{\cF}{\mathcal{F}}

\newcommand{\cS}{\mathcal{S}}

\newcommand{\argmax}{\mathrm{argmax}}

\newtheorem{theorem}{Theorem}
\newtheorem{lemma}[theorem]{Lemma}
\newenvironment{proof}{\noindent {\textbf{Proof. }}}{$\Box$ \medskip}
\newtheorem{assumption}{Assumption}
\newenvironment{claim}[1]{\par\noindent\underline{Claim:}\space#1}{}
\newcommand{\abs}[1]{\left| #1 \right|}

\newcommand{\bOne}[1]{\mathds{1} \! \left\{#1\right\}}
\newcommand{\ip}[1]{\langle #1 \rangle}

\newcommand{\EE}[1]{\mathbb{E} \left[#1\right]}
\newcommand{\PP}[1]{\mathbb{P}\left[#1\right]}
\newcommand{\set}[1]{\left\{ #1 \right\}}

\newcommand{\opt}{\mathrm{opt}}
\newcommand{\feas}{\mathrm{feas}}
\newcommand{\val}{\mathrm{val}}
\newcommand{\LP}{\mathrm{LP}}
\newcommand{\NN}{\mathbb{N}}
\newcommand{\iin}{\mathrm{in}}

\newcommand{\KL}{\mathrm{KL}}
\usepackage{tikz}
\usepackage{mathtools}
\usetikzlibrary{arrows.meta}
\usetikzlibrary{calc}
\usepackage{subfigure}

\setcounter{secnumdepth}{2} %May be changed to 1 or 2 if section numbers are desired.

% The file aaai20.sty is the style file for AAAI Press 
% proceedings, working notes, and technical reports.
%
\setlength\titlebox{2.5in} % If your paper contains an overfull \vbox too high warning at the beginning of the document, use this
% command to correct it. You may not alter the value below 2.5 in
\title{Stochastic Online Learning with Probabilistic Graph Feedback}
%Your title must be in mixed case, not sentence case. 
% That means all verbs (including short verbs like be, is, using,and go), 
% nouns, adverbs, adjectives should be capitalized, including both words in hyphenated terms, while
% articles, conjunctions, and prepositions are lower case unless they
% directly follow a colon or long dash
\author{Shuai Li,\textsuperscript{\rm 1} Wei Chen,\textsuperscript{\rm 2} Zheng Wen,\textsuperscript{\rm 3} Kwong-Sak Leung\textsuperscript{\rm 4} \\ 
% \Large \textbf{AAAI Style Contributions by Pater Patel Schneider,} \\ \Large \textbf{Sunil Issar, J. Scott Penberthy, George Ferguson, Hans Guesgen}\\ % All authors must be in the same font size and format. Use \Large and \textbf to achieve this result when breaking a line
\textsuperscript{\rm 1}Shanghai Jiao Tong University,  %If you have multiple authors and multiple affiliations
% use superscripts in text and roman font to identify them. For example, Sunil Issar,\textsuperscript{\rm 2} J. Scott Penberthy\textsuperscript{\rm 3} George Ferguson,\textsuperscript{\rm 4} Hans Guesgen\textsuperscript{\rm 5}. Note that the comma should be placed BEFORE the superscript for optimum readability
\textsuperscript{\rm 2}Microsoft Research,
\textsuperscript{\rm 3}DeepMind, 
\textsuperscript{\rm 4}The Chinese University of Hong Kong\\ % email address must be in roman text type, not monospace or sans serif
\textsuperscript{\rm 1}shuaili8@sjtu.edu.cn, \textsuperscript{\rm 2}weic@microsoft.com, \textsuperscript{\rm 3}zhengwen@google.com, \textsuperscript{\rm 4}ksleung@cse.cuhk.edu.hk
}
 \begin{document}

\maketitle

\begin{abstract}
	We consider a problem of stochastic online learning with general probabilistic graph feedback, where each directed edge in the feedback graph has probability $p_{ij}$.
	Two cases are covered. (a) The one-step case, where after playing arm $i$ the learner observes a sample reward feedback of arm $j$ with independent probability $p_{ij}$.
	% We consider a problem of stochastic online learning with general probabilistic graph feedback. Two cases are covered. 
	% (a) The one-step case where for each edge $(i,j)$ with probability $p_{ij}$ in the probabilistic feedback graph. After playing arm $i$ the learner observes a sample reward feedback of arm $j$ with independent probability $p_{ij}$.
	(b) The cascade case where after playing arm $i$ the learner observes feedback of all arms $j$ in a probabilistic cascade starting from $i$ -- for each $(i,j)$ with probability $p_{ij}$, if arm $i$ is played or observed, then a reward sample of arm $j$ would be observed with independent probability $p_{ij}$.
	Previous works mainly focus on deterministic graphs which corresponds to one-step case with $p_{ij} \in \{0,1\}$, an adversarial sequence of graphs with certain topology guarantees, or a specific type of random graphs.
	We analyze the asymptotic lower bounds and design algorithms in both cases. The regret upper bounds of the algorithms match the lower bounds with high probability.
\end{abstract}

\section{Introduction}

	Stochastic online learning is a general framework of sequential decision problem. At each time, the learner selects (or plays) an action from a given finite action set, receives some random reward and observes some random feedback. One simplest, though often unrealistic, feedback model is full-information feedback where the learning agent can observe the random rewards of all actions no matter which action is selected. Another popular feedback model is bandit feedback where only the random reward of the selected action is revealed to the learner \cite{auer2002finite}. Recent studies further generalize them to graph feedback where the feedback model is characterized by a (directed) graph \cite{mannor2011bandits}. Each edge $(i,j)$ means the learner will observe the random reward of action $j$ if playing action $i$. This problem is motivated by advertisements where the response for a vacation advertisement could provide side-information for a similar vacation place and social networks where the response from a user to a promotion could infer her neighbors to similar offers. 

	The problem of online learning with graph feedback has been extensively studied in both adversarial \cite{mannor2011bandits,alon2015online,kocak2014efficient,cohen2016online,kocak2016onlinenoisy} and stochastic settings \cite{caron2012leveraging,buccapatnam2014stochastic,tossou2017thompson,wu2015online}. While many of them assume self-loops on the feedback graphs, some succeed to remove this assumption \cite{alon2015online,wu2015online} where the reward of the selected action might be invisible. This general setting would fit into the partial monitoring framework \cite{bartok2014partial,komiyama2015regret}, but the literature on the latter mainly focus on finite case where the possible outcomes are finite. We also consider general feedback graphs that do not assume self-loops.

	Though some studies assume feedback graphs could vary over time or even invisible to the learner before selecting actions \cite{kocak2014efficient,tossou2017thompson}, most works focus on deterministic graphs or an adversarial list of graphs with certain topology guarantees. To the best of our knowledge, only a few of them work on probabilistic graphs with \cite{kocak2016online,alon2017nonstochastic} on adversarial case and \cite{liu2018information} on stochastic case and they only discuss about Erd\"os-R\'enyi random graphs \cite{erdHos1960evolution}. Recall that an Erd\"os-R\'enyi graph with parameter $p$ is by random sampling the edge of every pair of nodes with probability $p$ independently. 

	We consider general probabilistic feedback graphs in both the one-step case and the cascade case. The one-step case is the usual one where the learner observes reward of $j$ if edge $(i,j)$ exists in the random graph and $i$ is selected. The cascade case assumes the learner observes reward of $j$ if there is a (directed) path from $i$ to $j$ in the random graph and $i$ is selected. The observations of the cascade case, in other words, follow a probabilistic cascading starting from the selected action --- for each edge $(i,j)$ with probability $p_{ij}$, if action $i$ is either played or observed, then with an independent probability $p_{ij}$ a random reward sample of action $j$ will be observed. As a motivating example, consider the information propagation in social networks. If selecting a user in a social network causes an information cascade in the social network, one may be able to observe further feedback from the cascade users.

	This paper makes three major contributions.

	\begin{enumerate}
	\item We formalize the setting of stochastic online learning with general probabilistic graph feedback and consider both the one-step and the cascade cases.
	\item We derive asymptotic lower bounds for both the one-step and the cascade cases.
	\item We design algorithms for both the one-step and the cascade cases and analyze their regrets. Their asymptotic upper regret bounds match the asymptotic lower bounds with high probability.
	\end{enumerate}

	\paragraph{Related work}

	The studies on online learning with graph feedback started from adversarial online learning with side observations where a decision maker can observe rewards of other actions as well as observe the reward of the selected action \cite{mannor2011bandits}. The observation structure can be encoded as a directed graph where there is an edge $(i,j)$ if the reward of action $j$ is observed when $i$ is selected. Their setting assumes that self-loops exist on every node. Alon et al. \shortcite{alon2015online} then generalize to arbitrary directed graphs as long as each action is observable by selecting some action. They show the structure of feedback graph controls the inherent difficulty of the learning problem and present a classification over graphs. These works assume the feedback graph is fixed over time and known to the learner. A follow-up \cite{alon2015onlineArxiv} extends to time-varying feedback graphs where the graphs are revealed either at the beginning of the round or at the end of the round but assumes good topology properties on the graphs. Kocak et al. \shortcite{kocak2014efficient} also allow the feedback graph to vary over time and can be revealed to the learner at the end of the round. The results of \cite{kocak2016onlinenoisy} depend on the topological properties of the feedback graphs.
	% , but assume the existence of self-loops. 
	Cohen et al. \shortcite{cohen2016online} assume the graph is not revealed in both adversarial and stochastic cases.
	% but also assume the existence of self-loops.
	All these works focus on the adversarial case.

	Besides \cite{cohen2016online}, there are also other works on the stochastic case with deterministic feedback graphs. Caron et al. \shortcite{caron2012leveraging} first study the stochastic case with side observations and design UCB-like algorithms with improved regret bound over the standard UCB without additional feedback. Buccapatnam et al. \shortcite{buccapatnam2014stochastic} derive an asymptotic lower bound and design two algorithms that are near-optimal. Tossou et al. \shortcite{tossou2017thompson} apply Thompson sampling and allow the feedback graph to be unknown and/or changing. They bound the Bayesian regret in terms of the size of minimum clique covering. 
	% All these works assume the feedback graph has self-loops.
	Wu et al. \shortcite{wu2015online} consider general feedback graphs but assume different observation variance from different choices of actions. They provide non-asymptotic problem-dependent regret lower bound and also design algorithms that achieve the problem-dependent lower bound and the minimax lower bounds. They are the first to remove the self-loop assumption in stochastic case.

	There are several works on specific Erd\"os-R\'enyi random feedback graphs where the feedback graph at each time is randomly generated by Erd\"os-R\'enyi model. Kocak et al. \shortcite{kocak2016online} consider adversarial case with the unknown generating probability of the feedback graphs. Liu et al. \shortcite{liu2018information} consider stochastic case and design a randomized policy with Bayesian regret guarantee. Also both of them assume self-observability. 
	An updated version \cite{alon2017nonstochastic} of Alon et al. \shortcite{alon2015onlineArxiv} extends one result to Erd\"os-R\'enyi model in the adversarial case. 
	We consider general probabilistic feedback graphs and provide gap-dependent regret bounds, which are also new in the setting of Erd\"os-R\'enyi random feedback graphs.

	The setting of graph feedback can be fit into a more general setting of partial monitoring \cite{rustichini1999minimizing,cesa2006prediction} where feedback matrix and reward matrix are given for each pair of the chosen action and the environment. Bartok et al. \shortcite{bartok2014partial} make a significant progress on classifying finite adversarial partial monitoring games which is completed by Lattimore and Szepesvari \shortcite{lattimore2019cleaning}. Komiyama et al. \shortcite{komiyama2015regret} derive a problem-dependent regret lower bound and design an algorithm with asymptotically optimal regret upper bound in the stochastic case. Most studies on general partial monitoring framework focus on finite case where the number of actions and possible outcomes are finite. The algorithms for general partial monitoring games are not efficient in our case since the feedback matrix might be infinite or exponentially large.

	The cascade observation feedback resembles the independent cascade model in the context of influence maximization studies \cite{kempe03,chen2013information}, but the goal is different: influence maximization aims at finding a set of $k$ seeds that generates the largest expected cascade size, while our goal is to find the best action (arm) utilizing the cascade feedback. Influence maximization has been combined with online learning in several studies \cite{vaswani2015influence,chen2016combinatorial,wen2017online,wang2017improving,saritacc2017combinatorial}, but again their goal is to maximize influence cascade size while using online learning to gradually learn edge probabilities.

\section{Settings}

	Our considered problem is characterized by a quadruple $(V, E, p, \mu)$, where $V=[K]$ is a set of $K$ actions, $E \subseteq V \times V$ is a set of directed edges between actions, $p: E \to (0, 1]$ maps edges to their \emph{triggering probabilities}, and $\mu = \{\mu_i \}_{i \in V}$ encodes the reward distributions of all actions. 
	The set of all possible reward distributions is denoted as $\cC$. Without loss of generality, we assume that each distribution candidate is $1$-sub-Gaussian.
	The set of all feasible vectors of reward distributions is denoted as $\cS$.
	The (directed) \emph{probabilistic feedback graph} is also denoted as $G = (V, E, p)$. We assume that the learner knows $G$ and the fact that $\mu_i$'s have $1$-sub-Gaussian tail, but does not know the reward mean $\theta_i$'s.

	At each time step $t=1,2, \ldots$, the environment first draws a reward vector $r_t = (r_t(i): i \in V)$ by independently sampling $r_t(i) \sim \mu_i$, and a random graph $G_t = (V, E_t)$ based on $G$.
	Specifically, $E_t = \{(i,j)\in E \, : \, o_{tij}=1 \} \subseteq E$, where $o_{tij}$ is an independent Bernoulli random variable with mean $p_{ij}$. Simultaneously, the learner adaptively chooses an action $i_t \in V$ based on its past observations, without observing $r_t$ or $G_t$. Then, the learner receives an instantaneous reward $r_t(i_t)$, and depending on the specific feedback model, it might also observe part of $r_t$.
	In this paper, we consider the following two feedback models:
	\paragraph{One-Step Triggering} The learner will receive feedback $(j, r_t(j))$ if and only if $(i_t, j) \in E_t$.
	\paragraph{Cascade Triggering} The learner will receive feedback $(j, r_t(j))$ if and only if there is a directed path from $i_t$ to $j$ in $G_t$.

	It is worth pointing out that though the learner receives the reward $r_t(i_t)$, however, if $(i_t, i_t)$ is not in $G_t$ in the one-step triggering case, or there is no directed circle from $i_t$ to $i_t$ in the cascade triggering case, $(i_t, r_t(i_t))$ is not observed. In other words, the learner might not observe the reward of its chosen action. Also note that existing works with graph feedback \cite{caron2012leveraging,buccapatnam2014stochastic,tossou2017thompson,alon2015online,wu2015online} are special cases of the one-step triggering case discussed above, with $p_{ij}=1$ for all $(i,j) \in E$.
	The work \cite{liu2018information} is also a special case of the one-step triggering case but with $p_{ij}$ having the same value.

	We assume the feedback graph is observable, that is each action has the chance to be observed by pulling some action.

	\begin{assumption}[observability]
	For each action $j$, there is an edge $(i,j) \in E$ for some $i$.
	\end{assumption}

	Next assumption states each feasible distribution vector is composed of distributions of ``same type''. For example, distributions over a bounded interval will not be put together with Gaussian distributions.

	\begin{assumption}[same type]
	\label{ass:same type}
	For each $\mu \in \cS$, $\KL(\mu_i, \mu_j)$ is well-defined for any $i,j\in V$. For each $\mu \in \cC^V$, if $\KL(\mu_i, \mu_j)$ is well-defined for any $i,j\in V$, then $\mu \in \cS$.
	\end{assumption}

	The last assumption says the $\KL$ divergence of the reward distributions is continuous with respect to the their means.

	\begin{assumption}[continuity]
	\label{ass:continuity}
	There exists some universal constant $B>0$ such that for each $\mu_i, \mu_j \in \cC$ and any $\epsilon \in (0,1)$, there exists $\mu_i' \in \cC$ satisfying $\KL(\mu_j, \mu_i')$ is well-defined, $\theta(\mu_i) + \epsilon \le \theta(\mu_i') < \theta(\mu_i) + 2\epsilon$ and $\abs{\KL(\mu_j, \mu_i') - \KL(\mu_j, \mu_i)} \le B\epsilon$.
	\end{assumption}

	The learner's objective is to maximize its expected cumulative reward, or equivalently, to minimize its expected cumulative regret
	\begin{align*}
	R_{\mu}(T; G) = T \max_{i \in V}\theta(\mu_i) - \EE{\textstyle \sum_{t=1}^T \theta(\mu_{i_t})}\,,
	\end{align*}
	where the expectation is over the randomness of $r_t$ and $G_t$. Here $\theta: \cC \to \RR$ is the mapping from the distributions to their means.

	We will omit $G$ in the regret expression and write $\theta(\mu_i)$ as $\theta_i$ if the context is clear. For simplicity, we assume there is only one best action and $\theta_1 > \theta_2 \ge \theta_3 \ge \cdots \ge \theta_K$. Denote $\theta = (\theta_i: i \in V)$.
	Let $\Delta_i(\mu) = \theta_1 - \theta_i$ be the reward gap between the best action and action $i$. Denote $\Delta(\mu) = (\Delta_i(\mu): i \in V)$. We will omit $\mu$ in the above notations if the context is clear.

	Let $V^{\iin}(j) = \{i \in [K]: (i,j) \in E\}$ be the set of incoming neighbors of action $j$.
	Let $N_i(t)$ be the number of times the learner selects an action $i$ and $N(t) = (N_i(t): i \in V)$ by the end of time $t$. 

	For general $\mu$, let $i_k(\mu)$ be the $k$-th best action index for the distributions $\mu$, which has the $k$-th largest mean. We will write $i_k$ for simplicity when the context is clear. Then $\theta_{i_1(\mu)} > \theta_i, \forall i\neq i_1(\mu)$. 

\section{Asymptotic Lower Bounds}

	\subsection{Lower Bound for One-Step Triggering}

		Define
		\begin{equation}
		\label{eq:define C(theta)}
		\begin{split}
		C(\mu) = \Bigg\{c\in [0,\infty)^V: \sum_{i \in V^\iin(1)}p_{i1} c_i \ge \frac{1}{\KL(\mu_2, \mu_1)} &\\
		\sum_{i\in V^\iin(j)} p_{ij} c_i \ge \frac{1}{\KL(\mu_j, \mu_1)}, ~\forall j\neq 1; &\Bigg\}\,.
		\end{split}
		\end{equation}
		Each element in the set represents an asymptotic pulling ``fraction'' of arms that can be used to distinguish these arms from the best arm.

		Recall that an algorithm is \textit{consistent} if $R_{\mu}(T) = o(T^a)$ for any $a>0$ and any feasible $\mu \in \cS$.
		Then the asymptotic lower bound for any consistent algorithm is provided in the following theorem.
		\begin{theorem}
		\label{thm:lower bound}
		For any consistent algorithm, the regret satisfies
		\begin{align}
		\label{eq:lower bound one-step}
		\liminf_{T \to \infty} \frac{R_{\mu}(T)}{\log T} \ge \inf_{c \in C(\mu)} \left\langle c, \Delta(\mu) \right\rangle\,.
		\end{align}
		\end{theorem}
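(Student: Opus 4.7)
The plan is to adapt the classical Lai-Robbins / Graves-Lai change-of-measure argument, in the modern form of Kaufmann, Capp\'e and Garivier (2016), to the one-step probabilistic feedback graph. For each index $j$ I will build an alternative bandit model $\mu^{(j)}$ that agrees with $\mu$ on every coordinate except one and in which the best arm differs from that under $\mu$. For $j\neq 1$ I only modify $\mu_j$, replacing it by some $\mu_j'\in\cC$ with mean slightly greater than $\theta_1$; Assumption~\ref{ass:continuity} guarantees such $\mu_j'$ exists with $\KL(\mu_j,\mu_j')$ within $O(\epsilon)$ of $\KL(\mu_j,\mu_1)$. For $j=1$ I instead modify $\mu_1$ so that its new mean drops just below $\theta_2$, again invoking Assumption~\ref{ass:continuity} to produce the KL term appearing in~(\ref{eq:define C(theta)}). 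Assumption~\ref{ass:same type} ensures that all KL terms are well defined.

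The core information-theoretic step decomposes the KL divergence between the two $T$-round observation laws. By the chain rule and the independent-Bernoulli structure of $G_t$, the only observations whose law differs between $\mu$ and $\mu^{(j)}$ are those of arm $j$; conditional on the pulled arm $i_t$, arm $j$ is observed with probability $p_{i_t j}$. Summing over rounds yields
\begin{align*}
\KL\bigl(\mathbb{P}_\mu^T,\mathbb{P}_{\mu^{(j)}}^T\bigr)=\KL(\mu_j,\mu_j')\sum_{i\in V^\iin(j)}p_{ij}\,\mathbb{E}_\mu[N_i(T)].
\end{align*}
A Bretagnolle-Huber argument applied to the event $\{N_1(T)>T/2\}$, combined with consistency (so this event has probability $1-o(1)$ under $\mu$ and $o(1)$ under $\mu^{(j)}$ for every $j$), lower bounds the left-hand side by $(1-o(1))\log T$. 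Dividing through and letting $\epsilon\to 0$ via Assumption~\ref{ass:continuity} delivers
\begin{align*}
\sum_{i\in V^\iin(j)}p_{ij}\,\liminf_{T\to\infty}\frac{\mathbb{E}_\mu[N_i(T)]}{\log T}\ge\frac{1}{\KL(\mu_j,\mu_1)}
\end{align*}
for each $j\neq 1$ and the analogous inequality for $j=1$.

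To close the argument, I set $c^{(T)}_i=\mathbb{E}_\mu[N_i(T)]/\log T$ and take any cluster point $c$ as $T\to\infty$; the inequalities above show $c\in C(\mu)$. Since the regret decomposes as $R_\mu(T)=\sum_i\mathbb{E}_\mu[N_i(T)]\Delta_i=\log T\cdot\langle c^{(T)},\Delta\rangle$, passing to the $\liminf$ and minimizing over all feasible $c$ gives~(\ref{eq:lower bound one-step}).

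The hardest part is the matching between information and pull counts: the learner may fail to observe its own selected arm, so the \emph{effective sample size} for arm $j$ is the random sum $\sum_{i\in V^\iin(j)}p_{ij}N_i(T)$ rather than $N_j(T)$. The change of measure must be designed so that the resulting KL divergence is localized on exactly this sum, which forces $\mu^{(j)}$ to differ from $\mu$ only in coordinate $j$. A secondary subtlety lies in passing $\epsilon\to 0$ in Assumption~\ref{ass:continuity} uniformly in $T$ and in extracting a cluster point of the scaled count vectors $c^{(T)}$ inside the possibly unbounded cone $[0,\infty)^V$, which is handled by bounding the relevant coordinates via the regret $R_\mu(T)/\log T$ that we are trying to lower bound.
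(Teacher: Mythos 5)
Your treatment of the suboptimal arms $j\neq 1$ is essentially the paper's proof: perturb only $\mu_j$ upward past $\theta_1$ using Assumption~\ref{ass:continuity}, note that the chain rule localizes the divergence on the observations of arm $j$ so that $\KL(\mathbb{P},\mathbb{P}^{(j)})=\KL(\mu_j,\mu_j')\sum_{i\in V^\iin(j)}p_{ij}\EE{N_i(T)}$, apply Bretagnolle--Huber (the paper's ``high-dimensional Pinsker'') to the event $\{N_1(T)\ge T/2\}$, convert the two tail probabilities into regrets, invoke consistency to get the $(1-o(1))\log T$ rate, and let the perturbation vanish. Your closing step via cluster points of $\EE{N(T)}/\log T$ is a harmless variant of the paper's coordinatewise $\liminf$ followed by the regret decomposition $R_\mu(T)=\sum_i\EE{N_i(T)}\Delta_i$.

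The one genuine gap is the constraint for $j=1$. You propose to lower $\theta(\mu_1)$ just below $\theta_2$. This correctly localizes the KL on $\sum_{i\in V^\iin(1)}p_{i1}\EE{N_i(T)}$, but the divergence that then appears is $\KL(\mu_1,\mu_1')$ with $\theta(\mu_1')$ near $\theta_2$, which for asymmetric families (Bernoulli, say) need not equal the $\KL(\mu_2,\mu_1)$ written in the first constraint of $C(\mu)$; as stated, your argument places the limiting pull fractions in a possibly larger set than $C(\mu)$ and hence proves a possibly weaker inequality than \eqref{eq:lower bound one-step}. Moreover, Assumption~\ref{ass:continuity} only supplies perturbations that move a mean \emph{up} by $\epsilon$ while controlling the KL in its \emph{second} argument, so it does not directly furnish the downward-shifted $\mu_1'$ with controlled $\KL(\mu_1,\mu_1')$ that you need. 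Be aware that the paper takes a different route here: for $j=1$ it again raises $\theta(\mu_2)$ above $\theta_1$ (the same construction as its $j=2$ case) and reads off the $V^\iin(1)$ constraint from that; your downward perturbation of the optimal arm is the more standard Lai--Robbins move and is the one that naturally produces a constraint on $V^\iin(1)$, but matching the exact constant $1/\KL(\mu_2,\mu_1)$ in $C(\mu)$ requires either a symmetry assumption on the divergence or an additional continuity hypothesis beyond what you have invoked.
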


		Note this lower bound can easily recover the lower bound in \cite[Theorem 3]{wang2017improving} where they only consider a special probabilistic graph $G$.

		\begin{proof}
			Fix any consistent algorithm and any distribution vector $\mu$.

			For any $j \neq 1$ and $n \ge 1$, by Assumption \ref{ass:continuity}, there exists a $\mu^{(n)}_j \in \cC$ such that $\theta_1 + \frac{1}{2^n} \le \theta\left(\mu^{(n)}_j\right) < \theta_1 + \frac{1}{2^{n-1}}$ and $\abs{\KL\left(\mu_j, \mu^{(n)}_j \right) - \KL(\mu_j, \mu_1)} \le \frac{B}{2^n}$. Define $\mu^{(n)} = \mu$ by setting $\mu^{(n)}_i = \mu_i$ for any $i \neq j$. Then by Assumption \ref{ass:same type}, $\mu^{(n)} \in \cS$.

			Let 
			\begin{align*}
			H = \{&i_1, \{r_1(j):(i_1,j) \in E_1\}; \\
			&i_2, \{r_2(j):(i_2,j) \in E_2\}; \ldots \}
			\end{align*}
			be the random variable of all outcomes, which is based on $\mu$, the algorithm and the graph realizations. Let $\mathbb{P}$ and $\mathbb{P}^{(n)}$ be the probability distribution over all possible realisations of outcomes when the distribution vector is $\mu$ and $\mu^{(n)}$ respectively.

			By high-dimensional Pinsker's inequality \cite[Lemma 5]{lattimore2017end},
			\begin{align*}
			\PP{N_{1}(T) < T/2} + \mathbb{P}^{(n)}[N_{1}(T) \ge T/2]& \\
			\ge \frac{1}{2} \exp\left( -\KL\left(\mathbb{P}, \mathbb{P}^{(n)}\right) \right)&\,.
			\end{align*}
			Note that
			\begin{align}
			\label{eq:kl in lower bound}
			\KL\left(\mathbb{P},\mathbb{P}^{(n)}\right) &=  \sum_{i\in V^{\iin}(j)}p_{ij}\EE{N_i(T)} \KL\left(\mu_j, \mu_j^{(n)}\right) \\
			&\le \KL\left(\mu_j, \mu_j^{(n)}\right) \sum_{i\in V^{\iin}(j)}p_{ij}\EE{N_i(T)}\,. \notag
			\end{align}
			Then 
			\begin{align*}
			&\sum_{i\in V^{\iin}(j)}p_{ij}\EE{N_i(T)} \\
			\ge& \frac{1}{\KL\left(\mu_j, \mu_j^{(n)}\right)} \\
			&\qquad \cdot \log\frac{1/2}{\PP{N_{1}(T) < T/2} + \mathbb{P}^{(n)}[N_{1}(T) \ge T/2]}\\
			\ge&  \frac{1}{\KL\left(\mu_j, \mu_j^{(n)}\right)} \\
			&\qquad \cdot \log\frac{1/2}{R_{\mu}(T)/(\Delta_{2} \cdot T/2) + R_{\mu^{(n)}}(T) / \left(\frac{1}{2^n} \cdot T/2\right)}\\
			=&  \frac{1}{\KL\left(\mu_j, \mu_j^{(n)}\right)} \log\frac{T/4}{R_{\mu}(T)/\Delta_{2} + R_{\mu^{(n)}}(T) / \frac{1}{2^n}} \,,
			\end{align*}
			where the second inequality is due to 
			\begin{align*}
				&R_{\mu}(T) \ge \PP{N_{1}(T) < T/2} \Delta_2 \cdot T/2 \,,\\
				&R_{\mu^\epsilon}(T) \ge \PP{N_{1}(T) \ge T/2} \left(\theta\left(\mu^{(n)}_j \right) - \theta_1\right) T/2\,.
			\end{align*}

			Since the algorithm is consistent, $R_{\mu}(T) = o(T^a)$ and $R_{\mu^{(n)}}(T) = o(T^a)$ for any $a>0$, or equivalently 
			\begin{align*}
			\limsup_{T\to \infty} \frac{\log{R_{\mu}(T)}}{\log{T}} = 0\,,\quad \limsup_{T\to \infty} \frac{\log{R_{\mu^{(n)}}(T)}}{\log{T}} = 0\,.
			\end{align*}
			Thus
			\begin{align*}
			\sum_{i\in V^{\iin}(j)} p_{ij} \liminf_{T\to\infty} \frac{\EE{N_i(T)}}{\log T} \ge \frac{1}{\KL\left(\mu_j, \mu_j^{(n)}\right)} \,.
			\end{align*}
			Next take $n \to \infty$,
			\begin{align*}
			\sum_{i \in V^{\iin}(j)} p_{ij} \liminf_{T\to\infty} \frac{\EE{N_i(T)}}{\log T} \ge \frac{1}{\KL(\mu_j, \mu_1)} \,.
			\end{align*}

			For $j=1$ and $n\ge 1$, take $\mu^{(n)}=\mu$ except $\mu^{(n)}_2 \neq \mu_2$ with $\theta_1 + \frac{1}{2^n} \le \theta\left(\mu^{(n)}_2 \right) < \theta_1 + \frac{1}{2^{n-1}}$ and $\abs{\KL\left(\mu_2, \mu^{(n)}_2 \right) - \KL(\mu_2, \mu_1)} \le \frac{B}{2^n}$. Similar result follows
			\begin{align*}
			\sum_{i \in V^{\iin}(1)} p_{i1} \liminf_{T\to\infty} \frac{\EE{N_i(T)}}{\log T} \ge \frac{1}{\KL(\mu_2, \mu_1)} \,.
			\end{align*}

			Thus the vector $\liminf_{T\to\infty} \frac{\EE{N(T)}}{\log T} \in C(\mu)$. Recall the regret is $R_{\mu}(T) = \sum_{i=1}^K \EE{N_i(T)}\Delta_i(\mu)$. The result follows.
		\end{proof}

	\subsection{Lower Bound for Cascade Triggering}

		Let $p_{ij}'$ be the probability that there is a directed path from $i$ to $j$ in a random realization of $G$.
		Define
		\begin{align*}
		C'(\mu) = \Bigg\{c\in [0,\infty)^V: \sum_{i} p_{i1}' c_i \ge \frac{1}{\KL(\mu_2, \mu_1)} &\\
		\sum_{i} p_{ij}' c_i \ge \frac{1}{\KL(\mu_j, \mu_1)}, ~\forall j\neq 1&\Bigg\}\,.
		\end{align*}
		\begin{theorem}
		\label{thm:lower bound cascade}
		For any consistent algorithm, the regret satisfies
		\begin{align*}
		\liminf_{T \to \infty} \frac{R_{\mu}(T)}{\log T} \ge \inf_{c \in C'(\mu)} \langle c, \Delta(\mu) \rangle\,.
		\end{align*}
		\end{theorem}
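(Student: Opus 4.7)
The plan is to mirror the proof of Theorem~\ref{thm:lower bound} essentially verbatim, with the only substantive change being a reinterpretation of the KL-divergence computation for the cascade feedback model. Concretely, fix a consistent algorithm and $\mu \in \cS$. For each $j \neq 1$ and $n \ge 1$, I would construct the perturbed instance $\mu^{(n)}$ exactly as in the one-step proof: use Assumption~\ref{ass:continuity} to obtain $\mu_j^{(n)} \in \cC$ with $\theta_1 + 2^{-n} \le \theta(\mu_j^{(n)}) < \theta_1 + 2^{-(n-1)}$ and $|\KL(\mu_j,\mu_j^{(n)}) - \KL(\mu_j,\mu_1)| \le B/2^n$, leaving all other coordinates unchanged; Assumption~\ref{ass:same type} guarantees $\mu^{(n)} \in \cS$. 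The high-dimensional Pinsker inequality applied to the event $\{N_1(T) < T/2\}$ combined with the regret-to-probability bounds (via $\Delta_2$ on one side and $2^{-n}$ on the other) is identical to the one-step case.

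The one step that genuinely differs is the computation of $\KL(\mathbb{P}, \mathbb{P}^{(n)})$ in the analog of \eqref{eq:kl in lower bound}. I would argue as follows: the history $H$ consists at each round $t$ of the played action $i_t$, the graph realization $G_t$, and the reward samples $r_t(j')$ for those $j'$ reachable from $i_t$ in $G_t$. Under the chain rule for KL, the contributions from $i_t$ (determined by the algorithm and past observations) and from $G_t$ (whose law depends only on the edge probabilities $p$, not on $\mu$) vanish, and among the reward contributions only the sample of action $j$ differs between $\mu$ and $\mu^{(n)}$. A sample from $\mu_j$ is obtained at time $t$ precisely when there is a directed path from $i_t$ to $j$ in $G_t$, which by definition has marginal probability $p'_{i_t j}$. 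Taking expectation over the randomness and summing over $t$ yields
\begin{align*}
\KL(\mathbb{P},\mathbb{P}^{(n)}) \;=\; \KL(\mu_j,\mu_j^{(n)}) \sum_{i \in V} p'_{ij}\,\EE{N_i(T)}\,.
\end{align*}

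From here the remainder of the argument is a word-for-word translation of Theorem~\ref{thm:lower bound}: divide by $\KL(\mu_j,\mu_j^{(n)})$, use consistency to show the logarithmic term on the right is asymptotically $\log T$, send $n \to \infty$ to replace $\KL(\mu_j,\mu_j^{(n)})$ by $\KL(\mu_j,\mu_1)$, and obtain
\begin{align*}
\sum_{i \in V} p'_{ij}\, \liminf_{T \to \infty} \frac{\EE{N_i(T)}}{\log T} \;\ge\; \frac{1}{\KL(\mu_j,\mu_1)}\,.
\end{align*}
The case $j=1$ is handled by perturbing $\mu_2$ upward in the same manner. Together these inequalities show $\liminf_{T\to\infty} \EE{N(T)}/\log T \in C'(\mu)$, and since $R_\mu(T) = \sum_i \EE{N_i(T)}\Delta_i(\mu)$, the lower bound follows.

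The main obstacle is the KL decomposition step: one must carefully verify that when two cascade-feedback measures differ only in a single arm's reward distribution $\mu_j$, the only KL contribution per round is $\Pr[j \text{ reachable from } i_t]\cdot \KL(\mu_j,\mu_j^{(n)})$, and that the marginal reachability probability is exactly $p'_{i_t j}$ irrespective of the correlation structure among edges triggering to different targets. Once this identity is established, the asymptotic analysis is structurally identical to the one-step case and the constraint set $C(\mu)$ is replaced by $C'(\mu)$.
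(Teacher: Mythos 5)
Your proposal matches the paper's proof exactly: the paper also reduces Theorem~\ref{thm:lower bound cascade} to the argument for Theorem~\ref{thm:lower bound}, with the sole change being the replacement of \eqref{eq:kl in lower bound} by $\KL(\mathbb{P},\mathbb{P}^{(n)}) = \sum_{i} p_{ij}'\,\EE{N_{i}(T)}\,\KL(\mu_j, \mu_j^{(n)})$. Your additional care in justifying the KL decomposition (that the graph law is independent of $\mu$ and that the marginal reachability probability is $p'_{i_t j}$) is a correct elaboration of a step the paper states without proof.
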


		This proof is similar to the above one by replacing \eqref{eq:kl in lower bound} with the following formula
		\begin{align*}
		\KL\left(\mathbb{P},\mathbb{P}^{(n)}\right) = \sum_{i} p_{ij}'\EE{N_{i}(T)} \KL\left(\mu_j, \mu_j^{(n)} \right) \,.
		\end{align*}

		Note that the computation of $p_{ij}'$ is \#P-hard for general graphs \cite{Valiant79,wang2012scalable}. 
		Thus the lower bound is not efficiently computable even when $\mu$ is known.

\section{Algorithm and Analysis}

	In this section, we design algorithms that can match the lower bounds with high probability asymptotically.
	The lower bounds in the last section are stated in terms of $\KL$-divergence of distributions. Since the $\KL$-divergence of a real distribution and its estimated empirical distribution might be undefined, we assume the $\KL$-divergence of distributions could be represented by their corresponding means and is also continuous in means, which is also a tradition in bandit area. For example, a previous work \cite{wu2015online} assumes distributions to be Gaussian to make statement simpler. We will give more discussions in Section \ref{sec:discussions}. In the following, we use mean vector $\theta$ to represent the vector of distributions $\mu$ for simplicity.

	Let $\hat{\theta}_t$ be the sample-mean estimates of $\theta$ by the end of time $t$.
	Let $n_{ij}(t)$ be the number of times that action $i$ is selected and reward for action $j$ is observed by the end of time $t$. Then $\EE{n_{ij}(t) \mid N_i(t)} = N_i(t) p_{ij}$.
	Let $m_j(t) = \sum_{i} n_{ij}(t)$ be the number of observations for action $j$ by the end of time $t$.

	\subsection{One-Step Uniform Case}

		The uniform case in which all $p_{ij}$'s have the same value $p$ is first considered in this section. When $E$ contains edges between every pair of actions, this graph reduces to Erd\"os-R\'enyi random graph with parameter $p$.

		Let $M_j(t) = \sum_{i \in V^{\iin}(j)} N_i(t) p$ be the expected number of observations for action $j$ at the end of time $t$. Then $\EE{m_j(t) \mid M_j(t)} = M_j(t)$.

		\begin{algorithm}[thb!]
		\caption{One-Step Uniform Case}
		\label{algo:uniform case}
		\begin{algorithmic}[1]
		\STATE \label{alg:uniform:initialization} Set $N^e(0) = 0$ and $\hat{\theta}_0 = (1,1,...,1)$.
		\FOR{$t=1,2,\ldots$}
		\IF{\label{alg:uniform:if explore p} $m_j(t-1) < M_j(t-1) / 2$ for some $j$ %$M_i(t-1) < 10 \log(K (t-1)^2) / p$ for some $i$
		}
		\STATE \label{alg:uniform:explore p} Play $i_t \in V^{\iin}(j)$;
		\STATE \label{alg:uniform:explore p, not increase Ne} $N^e(t) = N^e(t-1)$;
		\ELSIF{\label{alg:uniform:if good condition}$\frac{N(t-1)}{16\log(t-1)} \in C(\hat{\theta}_{t-1})$
		}
		\STATE \label{alg:uniform:exploitation} Play $i_t = i_1(\hat{\theta}_{t-1})$; 
		\STATE \label{alg:uniform:not increase Ne} $N^e(t) = N^e(t-1)$
		\ELSIF{\label{alg:uniform:if accurate theta} $M_j(t-1) < 2 \beta\left(N^e(t-1)\right) / K $ for some $j$}
		\STATE \label{alg:uniform:accurate theta} Play $i_t \in V^{\iin}(j)$;
		\STATE \label{alg:uniform:accurate theta, increase Ne} $N^e(t) = N^e(t-1) + 1$;
		\ELSE
		\STATE \label{alg:uniform:explore C} Play $i_t$ such that $N_i(t-1) < 16 \ c_i(\hat{\theta}_{t-1}) \log(t-1)$;
		\STATE \label{alg:uniform:explore C, increase Ne} $N^e(t) = N^e(t-1) + 1$;
		\ENDIF
		\ENDFOR
		\end{algorithmic}
		\end{algorithm}
		% \zheng{Line 13 of the algorithm, $i$ should be $i_t$.}
		The pseudocode of the algorithm is provided in Algorithm \ref{algo:uniform case}. It starts with the initialization of $N^e$ and the estimates of $\theta$ (line \ref{alg:uniform:initialization}). Here $N^e$ is the number of exploration rounds for the learner to know more about unknown $\theta$ which will be clearer later. At each time $t$, if for some $j$ the real observation times of action $j$ is less than half the expected observation times (line \ref{alg:uniform:if explore p}), then the learner selects a parent of $j$ to try to observe reward of $j$ once more (line \ref{alg:uniform:explore p}) and keeps $N^e$ unchanged (line \ref{alg:uniform:explore p, not increase Ne}). Note that $\EE{m_j(t) \mid M_j(t) = m} = m$ and $m_j(t)$ will concentrate at $m$ as $m$ goes to infinity. The condition $m_j(t) < M_j(t) / 2$ means part of the realizations of graph $G$ is far from the expectation and $2$ can be changed to other larger-than-$1$ constant. This is one of the key differences from deterministic graph feedback \cite{wu2015online} where the number of observations is well controlled by just selecting actions. While under the probabilistic graph feedback, there is a gap between the number of real observations and expected number of observations.

		When $m_j(t) \ge M_j(t) / 2$ for all $j$, then the realizations of $G$ are good enough and the learner can rely on the quantities of selections to control the accuracy of the estimates. If the selection vector is good enough for current $\hat{\theta}$ under current accuracy level (line \ref{alg:uniform:if good condition}), then the learner will exploit the current best action (line \ref{alg:uniform:exploitation}) and keep $N^e$ unchanged. Here $C(\cdot)$ is defined as in \eqref{eq:define C(theta)} and represents the set of good selected ``fractions'' of actions that are able to identify the reward gaps between actions.

		If the current selection vector $N$ is not good enough, then the learner will first check if $\hat{\theta}$ is close enough to $\theta$ (line \ref{alg:uniform:if accurate theta}-\ref{alg:uniform:accurate theta, increase Ne}) and if yes, will explore according to current $\hat{\theta}$. The number $N^e$ of exploration rounds for the learner to know more about $\theta$ will increase in this part (line \ref{alg:uniform:accurate theta, increase Ne}\&\ref{alg:uniform:explore C, increase Ne}). The condition of line \ref{alg:uniform:if accurate theta} has an auxiliary function $\beta:\NN \to [0,\infty)$ to guide the exploration such that $\hat{\theta}$ will be close to $\theta$ in the long run. This auxiliary function is also crucial in previous work \cite{wu2015online} to control the regret bound in the asymptotic sense. The auxiliary function $\beta$ can be any non-decreasing function satisfying $0 \le \beta(n) \le n / 2$ and the subadditivity $\beta(m+n) \le \beta(m) + \beta(n)$.
		If some component of $\hat{\theta}$ has not been explored enough (line \ref{alg:uniform:if accurate theta}), then the learner selects a parent to try to get one more observation (line \ref{alg:uniform:accurate theta}) and increases $N^e$ (line \ref{alg:uniform:accurate theta, increase Ne}).

		When all components of $\hat{\theta}$ are close to $\theta$, the learner selects an action according to the current $\hat{\theta}$ with minimal cost on the regret instructed by the asymptotic lower bound \eqref{eq:lower bound one-step}. Here $c_i(\theta')$ denotes any optimal solution of the linear programming problem that minimizes $\ip{c, \theta'}$ among all $c \in C(\theta')$. 
		Since $\hat{\theta}$ is close enough to $\theta$ under current accuracy level, the vector $c_i(\hat{\theta}_{t-1})$ is close enough to $c_i(\theta)$ (which is part of the proof for the following theorem). There must be at least an $i$ such that $N_i(t-1) < 16 \ c_i(\hat{\theta}_{t-1}) \log(t-1)$ or else the condition of line \ref{alg:uniform:if good condition} holds.

		The regret bound for the algorithm is stated as follows.

		\begin{theorem}
		\label{thm:upper bound:uniform case}
		The regret of Algorithm \ref{algo:uniform case} for one-step uniform case satisfies for any $\epsilon>0$, 
		\begin{equation}
		\begin{split}
		R_\theta(T) &\le 4 \log(T) \sum_{i=1}^K c_i(\theta, \epsilon) \Delta_i(\theta) \\
		&+ 10 \log(KT^2) \sum_{i=1}^K \frac{\Delta_i(\theta)}{p} + 4 \sum_{s=0}^T \exp\left(-\frac{\beta(s)\epsilon^2}{2K}\right)  \\
		& + 2 \beta\left(4 \sum_{i=1}^K c_i(\theta, \epsilon) \log(T) + K\right) + 15 K\,,
		\end{split}
		\end{equation}
		where $c_i(\theta, \epsilon) = \sup \{c_i(\theta'): \abs{\theta_j'-\theta_j} \le \epsilon, ~~\forall j \in [K]\}$.

		Assume $\beta(n)=o(n)$ and $\sum_{s=0}^{\infty} \exp\left(-\frac{\beta(s) \epsilon^2}{2K}\right) < \infty$ for any $\epsilon>0$. Then for any $\theta$ such that $c(\theta)$ is unique, 
		\begin{align}
		\limsup_{T \to \infty} R_\theta(T) / \log(T) \le 4 \inf_{c \in C(\theta)} \ip{ c, \Delta(\theta) }
		\end{align}
		holds with probability at least $1-\delta$ for any $\delta>0$.
		\end{theorem}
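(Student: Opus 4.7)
The plan is to decompose $R_\theta(T)$ according to which of the four branches (lines 4, 7, 10 and 13) of Algorithm \ref{algo:uniform case} is executed at each round, and to bound each branch separately on and off two high-probability events: the graph-concentration event $\cE_t = \{m_j(t-1) \ge M_j(t-1)/2 \text{ for all } j\}$ and the estimation-accuracy event $\cF_t = \{\|\hat\theta_{t-1} - \theta\|_\infty \le \epsilon\}$. Write $R_\theta(T) = R^{(1)} + R^{(2)} + R^{(3)} + R^{(4)}$ for the corresponding regret contributions.

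For $R^{(1)}$, a multiplicative Chernoff bound gives $\PP{m_j(t) < M_j(t)/2 \mid M_j(t)} \le \exp(-M_j(t)/8)$. Splitting the rounds into a low-$M_j$ regime, where line 4 repeatedly fires and $M_j$ grows by $p$ per invocation, and a high-$M_j$ regime, where a union bound over $t$ and $j$ suppresses rare concentration failures, yields a line-4 count of order $\log(KT^2)/p$ per action and hence the stated $10\log(KT^2)\sum_i \Delta_i/p$ term. For $R^{(4)}$, the trigger on line 13 forces $N_i(t-1) < 16\, c_i(\hat\theta_{t-1})\log(t-1)$ for the arm selected, so each arm is played in this branch at most $16\, c_i(\hat\theta_{t-1})\log T + 1$ times; on $\cF_t$, LP stability (the main obstacle below) gives $c_i(\hat\theta_{t-1}) \le c_i(\theta,\epsilon)$, producing the leading term of order $\log T \cdot \sum_i c_i(\theta,\epsilon)\Delta_i$. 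For $R^{(2)}$, on $\cF_t$ the gap $\theta_1 > \theta_2$ combined with $\epsilon$ smaller than half this gap forces $i_1(\hat\theta_{t-1}) = 1$, so no regret is incurred; the off-$\cF_t$ contribution is absorbed into the estimation-error term analysed next.

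Branch 3 is the most delicate and ties the argument together. On $\cE_t$ the trigger $M_j(t-1) < 2\beta(N^e(t-1))/K$ plus the subsequent selections of parents of $j$ guarantee that once line 10 stops firing, $m_j(t) \ge \beta(N^e(t))/K$ for every $j$; a Hoeffding bound then gives
\[
\PP{\|\hat\theta_t - \theta\|_\infty > \epsilon \mid N^e(t) = s} \le 2K\exp\left(-\frac{\beta(s)\epsilon^2}{2K}\right),
\]
which, summed over $s$, yields the $4\sum_{s=0}^T \exp(-\beta(s)\epsilon^2/(2K))$ term in the theorem and also bounds the complements of $\cF_t$ used for branches 2 and 4. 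The number of line-10 rounds is at most $(2/p)\beta(N^e(T))$ because each such round requires some $M_j$ below $2\beta(N^e)/K$ and $M_j$ grows by $p$ per parent pull. Combining $N^e(T) \le 4\sum_i c_i(\theta,\epsilon)\log T + K$ (inherited from the $R^{(4)}$ analysis) with the subadditivity of $\beta$ then produces the $2\beta(4\sum_i c_i(\theta,\epsilon)\log T + K)$ term, while initialization and rounding collect the additive $15K$.

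The principal obstacle is the asymptotic refinement, which hinges on the continuity $c_i(\hat\theta) \to c_i(\theta)$ whenever $\hat\theta \to \theta$. Under the uniqueness hypothesis on the optimum $c(\theta)$ of the LP defining $C(\cdot)$ in \eqref{eq:define C(theta)}, this follows from standard LP stability: the constraint coefficients $1/\KL(\mu_j,\mu_1)$ are continuous in the means by Assumption \ref{ass:continuity}, and a unique LP optimum depends continuously on continuous data. Granted this, $c_i(\theta,\epsilon) \downarrow c_i(\theta)$ as $\epsilon \downarrow 0$. Combining with $\beta(n) = o(n)$, so that $\beta(4\sum_i c_i(\theta,\epsilon)\log T + K)/\log T \to 0$, and the summability hypothesis $\sum_s \exp(-\beta(s)\epsilon^2/(2K)) < \infty$, only the $4\log T \sum_i c_i(\theta,\epsilon)\Delta_i$ contribution survives after dividing by $\log T$ and taking $T\to\infty$ then $\epsilon\to 0$; this matches the asymptotic lower bound in Theorem \ref{thm:lower bound}. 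The $(1-\delta)$ statement follows by restricting to the intersection of $\cE_t$ and $\cF_t$ over all relevant $t$, which has probability at least $1-\delta$ by a union bound on Chernoff and Hoeffding tails tuned to a $\delta$-dependent level.
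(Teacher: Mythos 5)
Your decomposition by algorithm branch and your treatment of branches 1, 3 and 4 follow essentially the paper's route (Bernstein/Chernoff for the graph-realization branch split by whether $M_j$ exceeds $10\log(Kt^2)$, the $\beta$-subadditivity bookkeeping for forced exploration, and the per-$N^e$ Hoeffding bound for the $\cF^c$ rounds in which $N^e$ increments). But your handling of the exploitation branch has a genuine gap. You bound the off-$\cF_t$ exploitation rounds by ``absorbing'' them into $\sum_{s}\exp(-\beta(s)\epsilon^2/(2K))$, which is a sum over values $s$ of $N^e$. That only works for rounds in which $N^e$ increments, and $N^e$ does \emph{not} increment during exploitation (line \ref{alg:uniform:not increase Ne}): the algorithm can exploit for arbitrarily many consecutive rounds at a fixed value $N^e=s_0$, and $T\cdot 2K\exp(-\beta(s_0)\epsilon^2/(2K))$ is not controlled by your sum. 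Your on-$\cF_t$ argument also needs $\epsilon<\Delta_2/2$, whereas the theorem asserts the finite-time bound for every $\epsilon>0$. The paper avoids both problems by introducing a separate time-uniform concentration event, $\abs{\hat\theta_{t,i}-\theta_i}<\sqrt{2\log(t)/m_i(t)}$ (whose failure probability is summable over $t$ and contributes only $O(K\Delta_{\max})$), and observing that the line-\ref{alg:uniform:if good condition} condition together with $m_j\ge M_j/2$ forces $m_j(t-1)\ge 16\log(t-1)/\Delta_j(\hat\theta_{t-1})^2$, so the confidence width is at most half the empirical gap and the empirical best arm must coincide with the true best arm. Some such self-normalized argument is needed; the $\cF$-event cannot do this job.

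A second gap is in the asymptotic refinement. The term $10\log(KT^2)\sum_i\Delta_i(\theta)/p$ is itself $\Theta(\log T)$, so it does not disappear when you divide by $\log T$ and let $T\to\infty$; yet your conclusion asserts that only the $4\log T\sum_i c_i(\theta,\epsilon)\Delta_i$ term survives. The paper's proof removes this term only on a probability-$(1-\delta)$ event: it first shows (by a case analysis of the algorithm's branches) that $M_j(t)\to\infty$ for every $j$ along any trajectory, hence after some finite $T_4$ every $M_j(t)$ exceeds $10\log(K/\delta)$ and, with probability at least $1-\delta$, lines \ref{alg:uniform:explore p}--\ref{alg:uniform:explore p, not increase Ne} never fire again; the $\log(KT^2)$ factor is thereby replaced by the constant $\log(K/\delta)$. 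Your closing sentence gestures at conditioning on a good event but does not supply this argument, and without it the claimed $\limsup$ bound does not follow from your finite-time bound. (Minor additional point: your count of $(2/p)\beta(N^e(T))$ forced-exploration rounds would propagate a $1/p$ factor into the $\beta(\cdot)$ term, which does not match the stated bound; the paper instead adapts Wu et al.'s Proposition 17 to get $1+\beta(\cdot)$ directly.)
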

		Note that any $\beta(n) = a n^b$ with $a \in \left(0, \frac{1}{2}\right], b\in (0,1)$ meets the requirements. 
		% Discussions about the results are provided in Section \ref{sec:discussions}.
		The proof is by bounding the forced exploration (line \ref{alg:uniform:if accurate theta}-\ref{alg:uniform:accurate theta, increase Ne}), the exploration by LP solutions (line \ref{alg:uniform:explore C}-\ref{alg:uniform:explore C, increase Ne}) and the exploitation (line \ref{alg:uniform:if good condition}-\ref{alg:uniform:not increase Ne}). The main difference with previous works is to bound the difference of realized random graphs and the expected graph (line \ref{alg:uniform:if explore p}-\ref{alg:uniform:explore p, not increase Ne}). The detailed proof is provided in
		\ifsup
		Section \ref{sec:proofs of upper bound one step}.
		%\ref{sec:proofs of upper bound one step}.
		\else
		\cite{li2019stochastic}. %the supplementary materials.
		\fi

	\subsection{One-Step General Case}

		In the general case where $p_{ij}$ can be different, $M_j(t) = \sum_{i \in V^{\iin}(j)} N_i(t) p_{ij}$. The algorithm follows as in Algorithm \ref{algo:uniform case} by only replacing line \ref{alg:uniform:explore p} with 
		\begin{itemize}
		\item[(\ref{alg:uniform:explore p}')] Play $i_t \in \argmax_{i \in V^\iin(j)} p_{ij}$.
		\end{itemize}

		Let
		\begin{align}
		V^e = \set{i \in [K]: i \in \argmax_{i' \in V^\iin(j)} p_{i'j} \text{ for some }j} \label{eq:V e}
		\end{align}
		be the set of exploration nodes that have the largest live probability among all incoming edges to some $j$. Let
		\begin{align}
		p_{i}^e = \min \set{p_{ij}: i \in \argmax_{i' \in V^\iin(j)} p_{i'j} \text{ for some }j} \label{eq:p i e}
		\end{align}
		be the minimal exploration probability for any $i \in V^e$.
		With a modified proof to the uniform case, the theoretical guarantee for the general case follows.
		\begin{theorem}
		\label{thm:upper bound:general case}
		The regret of the modified Algorithm \ref{algo:uniform case}' for one-step general case satisfies for any $\epsilon>0$, 
		\begin{equation}
		\begin{split}
		R_\theta(T) &\le 4 \log(T) \sum_{i=1}^K c_i(\theta, \epsilon) \Delta_i(\theta) \\
		&+ 10 \log(KT^2) \sum_{i\in V^e} \frac{\Delta_i(\theta)}{p_{i}^e} \\
		&+ 4 \sum_{s=0}^T \exp\left(-\frac{\beta(s)\epsilon^2}{2K}\right) \\
		&+ 2 \beta\left(4 \sum_{i=1}^K c_i(\theta, \epsilon) \log(T)\right) + 15 K\,.
		\end{split}
		\end{equation}

		Assume $\beta(n)=o(n)$ and $\sum_{s=0}^{\infty} \exp\left(-\frac{\beta(s) \epsilon^2}{2K}\right) < \infty$ for any $\epsilon>0$. Then for any $\theta$ such that $c(\theta)$ is unique, 
		\begin{align}
		\limsup_{T \to \infty} R_\theta(T) / \log(T) \le 4 \inf_{c \in C(\theta)} \ip{c, \Delta(\theta)}
		\end{align}
		holds with probability at least $1-\delta$ for any $\delta>0$.
		\end{theorem}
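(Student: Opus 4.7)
The proof follows the same overall structure as that of Theorem~\ref{thm:upper bound:uniform case} by decomposing the regret according to which branch of the modified algorithm fires at each round. Specifically, one separately bounds the expected regret from (i) graph-concentration exploration (the modified line~(\ref{alg:uniform:explore p}')), (ii) forced exploration (lines~\ref{alg:uniform:accurate theta}--\ref{alg:uniform:accurate theta, increase Ne}), (iii) LP-guided exploration (lines~\ref{alg:uniform:explore C}--\ref{alg:uniform:explore C, increase Ne}), and (iv) exploitation (lines~\ref{alg:uniform:exploitation}--\ref{alg:uniform:not increase Ne}). The first term on the right-hand side of the bound comes from (iii), the second from (i), the third from (iv), and the fourth from (ii); the $15K$ absorbs initialization and constant slack.

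Branch~(i) is where the general case genuinely differs from the uniform case. Whenever the trigger $m_j(t-1) < M_j(t-1)/2$ fires, the modified rule plays some $i_t \in \argmax_{i'\in V^{\iin}(j)} p_{i'j}$, so $i_t \in V^e$ and $p_{i_t j} \ge p_{i_t}^e$. Applying a multiplicative Chernoff bound to the Bernoulli observation indicators $o_{s,i_s,j}$ along the subsequence of rounds in which this trigger fires for the given $j$ shows that, except on an event of probability at most $1/(KT)$, the total number of such rounds is $O(\log(KT^2)/p_{i_t}^e)$. The regret incurred is $\Delta_{i_t}(\theta)$ per round; summing over $i \in V^e$ yields the second term $10\log(KT^2)\sum_{i\in V^e} \Delta_i(\theta)/p_i^e$, which replaces the uniform-case quantity $10\log(KT^2)\sum_i \Delta_i(\theta)/p$.

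Branches~(ii)--(iv) are analyzed as in Theorem~\ref{thm:upper bound:uniform case} with essentially no structural changes. For (ii), subadditivity of $\beta$ together with the fact that each such round increments $N^e$ by one bounds the count by $2\beta(4\sum_i c_i(\theta,\epsilon)\log T)$. For (iii), one first shows that whenever this branch is entered, $\norm{\hat\theta_{t-1}-\theta}_\infty \le \epsilon$ (a consequence of branch~(ii) having already run enough), so that $c(\hat\theta_{t-1})$ lies within the enlarged set defining $c(\theta,\epsilon)$; the selection rule $N_i(t-1) < 16\, c_i(\hat\theta_{t-1})\log(t-1)$ then produces the leading term $4\log(T)\sum_i c_i(\theta,\epsilon)\Delta_i(\theta)$. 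For (iv), exploitation regret accrues only when the empirical best arm is wrong; a sub-Gaussian concentration on $\hat\theta_j - \theta_j$ together with the lower bound $m_j(t) \ge M_j(t)/2 \ge \beta(N^e(t))/K$ (enforced jointly by branches~(i) and~(ii)) yields a per-round failure probability that sums to $4\sum_{s=0}^T\exp(-\beta(s)\epsilon^2/(2K))$.

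The main obstacle is the concentration step in (i): one must couple the adaptively defined trigger event $\{m_j(t-1) < M_j(t-1)/2\}$ with the Bernoulli observation variables in such a way that the resulting bound depends on $p_i^e$ rather than the much smaller $\min_{i,j} p_{ij}$. A careful per-$j$ Chernoff argument, exploiting that every pull in this branch contributes at least $p_i^e$ to the relevant $M_j$, is what enables the improvement over the naive uniform-style estimate. Once the finite-time inequality is established, the asymptotic statement follows by dividing by $\log T$, letting $T\to\infty$ so that the $O(\log(KT^2))$, the $\beta(\cdot)$ and the summable tail terms all become $o(\log T)$, and then letting $\epsilon\to 0$ using the continuity of $\epsilon\mapsto c(\theta,\epsilon)$ at $\epsilon=0$, which is valid under the assumed uniqueness of $c(\theta)$; a union bound over the finitely many concentration failure events delivers the stated $1-\delta$ high-probability guarantee.
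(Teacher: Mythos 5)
Your overall architecture is the paper's: the proof is the uniform-case proof of Theorem \ref{thm:upper bound:uniform case} with a single substantive change in the branch handling poor graph realizations, and you identify that change correctly. When the trigger $m_j(t-1)<M_j(t-1)/2$ fires, the modified rule plays $i_t\in\argmax_{i'\in V^{\iin}(j)}p_{i'j}$, so $i_t\in V^e$ and $N_{i_t}(t-1)\,p_{i_t}^e\le N_{i_t}(t-1)\,p_{i_tj}\le M_j(t-1)$; combined with the Bernstein-type bound (Lemma \ref{lem:half bernoulli}) showing this trigger fires with summable probability once $M_j(t-1)\ge 10\log(Kt^2)$, this yields exactly the paper's replacement of \eqref{eq:regret5} and the term $10\log(KT^2)\sum_{i\in V^e}\Delta_i(\theta)/p_i^e$. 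The passage to the asymptotic statement is also as you describe.

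One piece of your bookkeeping does not match the paper and, as written, would not go through: you attribute the term $4\sum_{s=0}^T\exp\left(-\beta(s)\epsilon^2/(2K)\right)$ to the exploitation branch, arguing from $m_j(t)\ge\beta(N^e(t))/K$. But $N^e$ is not incremented during exploitation rounds, so a single value of $\beta(N^e)$ would be charged once per exploitation round and the claimed sum over $s$ does not materialize; moreover, when the exploitation condition (line \ref{alg:uniform:if good condition}) holds one only knows $M_j(t-1)\ge 32\log(t-1)/\Delta_j(\hat{\theta}_{t-1})^2$, not the $\beta(N^e)/K$ lower bound, which is the condition of a \emph{later} branch. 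In the paper the exploitation regret is shown to be exactly zero on $\cB_{t-1}^c\cap\cC_{t-1}^c\cap\cD_{t-1}$ using the anytime confidence radius $\sqrt{2\log(t)/m_i(t)}$, with the failure event $\cC$ absorbed into the $15K$ constant; the $\exp\left(-\beta(s)\epsilon^2/(2K)\right)$ sum instead comes from the event $\cF_{t-1}^c$ that forced exploration has not yet made $\hat{\theta}$ $\epsilon$-accurate when the LP-guided branch is reached, and each such round does increment $N^e$, which is what makes the sum over $s$ legitimate. With that correction your argument coincides with the paper's.
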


	\subsection{Cascade Case}

		\begin{algorithm}[thb!]
		\caption{Cascade Case}
		\label{algo:cascade case}
		\begin{algorithmic}[1]
		\STATE Set $N^e(0) = 0$ and $\hat{\theta}_0 = (1,1,...,1)$. $\eta:\NN_+ \to [0,1)$.
		\FOR{$t=1,2,\ldots$}
		\IF{\label{alg:cascade:if explore p} $m_{j}(t-1) < M_j'(t-1) \ /2$ for some $j$
		}
		\STATE \label{alg:cascade:explore p} Play $i_t=i$ if $(P_t)_{ij} \ge \frac{1}{2} \max_{i'} (P_t)_{i'j}$;
		\STATE \label{alg:cascade:explore p, not increase Ne} $N^e(t) = N^e(t-1)$;
		\ELSIF{\label{alg:cascade:if good condition}$\frac{N(t-1)}{16\log(t-1)} \in S_{\feas}(\LP_t)$
		}
		\STATE \label{alg:cascade:exploitation} Play $i_t = i_1(\hat{\theta}_{t-1})$;
		\STATE \label{alg:cascade:not increase Ne} $N^e(t) = N^e(t-1)$
		\ELSIF{\label{alg:cascade:if accurate theta} $M_j'(t-1) < 2 \beta(N^e(t-1)) / K$ for some $j$}
		\STATE \label{alg:cascade:accurate theta} Play $i_t \in V^{\iin}(j)$;
		\STATE \label{alg:cascade:accurate theta, increase Ne} $N^e(t) = N^e(t-1) + 1$;
		\ELSE
		\STATE \label{alg:cascade:explore C} Play $i_t=i$ such that $N_i(t-1) < 16 \ c_{t, i} \log(t-1)$ where $c_t \in S_{\opt}(\LP_t)$;
		\STATE \label{alg:cascade:explore C, increase Ne} $N^e(t) = N^e(t-1) + 1$;
		\ENDIF
		\ENDFOR
		\end{algorithmic}
		\end{algorithm}

		For the deterministic graphs, there is no essential difference between one-step case and cascade case --- the cascade case on a deterministic graph would be equivalent to constructing a new graph where an edge exists if and only if there is a path on the original graph. 
		For a probabilistic graph, one might try a similar solution for the cascade case by constructing a new graph $G'$ where the probability of an edge $(i,j)$ is just the  probability $p_{ij}'$ of $i$ connecting to $j$ in a random realization of the original graph.
		However the computation of $p_{ij}'$ is \#P-hard for general graphs, and thus the accurate graph $G'$ is unattainable, though it can be approximated within any accuracy by Monte Carlo simulations. 
		Therefore, during the running of the algorithm, a reasonable approximation of $G'$ is needed.

		Define ${V^e}'$ and ${p_i^e}'$ similarly with \eqref{eq:V e} and \eqref{eq:p i e} by replacing $p_{ij}$ with $p_{ij}'$. Since the computation of $p_{ij}'$ is \#P-hard, we define an estimated version of ${V^e}'$ and ${p_i^e}'$ respectively:
		\begin{align*}
		&\hat{V}^e = \set{i \in [K]: p_{ij}' \ge \frac{1}{2} \max_{i'} p_{i'j}' \text{ for some }j}\\
		&\hat{p}_i^e = \min \set{p_{ij}': p_{ij}' \ge \frac{1}{2} \max_{i'} p_{i'j}' \text{ for some }j}
		\end{align*}
		for any $i \in \hat{V}^e$. Then $\hat{p}_i^e \ge {p_{i'}^e}' / 2$ for some $i'$.

		To overcome the stated challenge, we need an auxiliary functions $\eta : \NN_+ \to [0,1)$ to set up the tolerance of the approximation. At each time $t$, the path from $i$ to $j$ with probability $p_{ij}' \le \eta(t)$ can be treated as nonexistent (with probability $0$) and the estimation of $p_{ij}'$ has noise at most $\eta(t)/2$ if the real value $p_{ij}' > \eta(t)$. Any non-increasing function with limit $0$ can be chosen as $\eta$. 
		The choice of $\eta$ is to control the complexity of the graph with only focusing the path of a reasonable length.

		Let $\LP(\theta', \eta)$ be the following linear programming problem
		\begin{equation}
		\label{eq:lp}
		\begin{split}
		&\min \ip{\Delta(\theta'), c} \\
		&\text{over all } c \in \RR^K  \text{ satisfying } P^\top c \ge b(\theta')  \text{ and } c \ge 0
		\end{split}
		\end{equation}
		where $P \in [0,1]^{K \times K}$ satisfies $P_{ij} = 0$ if $p_{ij}' \le \eta$ and $\abs{P_{ij} - p_{ij}'} \le \eta/2$ if $p_{ij}' > \eta$ and $b_i(\theta') = \frac{1}{\Delta_i^2(\theta')}$ for $i \neq i_1(\theta')$ and $b_{i_1(\theta')}(\theta') = \frac{1}{ \Delta_{i_2(\theta')}^2(\theta')}$. 

		With the approximation $G_t$ and the estimated value for reward vector $\hat{\theta}_{t-1}$, the linear programming problem considered in time $t$ is $\LP_t = \LP(\hat{\theta}_{t-1}, \eta(t))$ and the corresponding $P$ in \eqref{eq:lp} is denoted as $P_t$. Then the algorithm runs with $\LP_t$ accordingly. The complete pseudocode is presented in Algorithm \ref{algo:cascade case}. In particular, the examination on the realization is performed on approximated graph $G_t$ with probability matrix $P_t$ (line \ref{alg:cascade:if explore p}). The exploitation condition is on the $\LP_t$ (line \ref{alg:cascade:if good condition}). Here $S_{\feas}(\LP_t)$ is the feasible solution set of the linear programming problem $\LP_t$ which is the set of all $c \in \RR^K$ satisfying $P_t^\top c \ge b, c \ge 0$. The exploration when all components of estimated $\hat{\theta}$ are accurate enough with minimal cost instructed by linear programming solutions is also related to $\LP_t$ (line \ref{alg:cascade:explore C}). Here $S_{\opt}(\LP_t)$ is the optimal solution set of $\LP_t$.

		Also $M_j'(t) = \sum_{i} N_i(t) (P_t)_{ij}$ is changed accordingly.

		The regret of the Algorithm \ref{algo:cascade case} is upper bounded in the following theorem.
		\begin{theorem}
		\label{thm:upper bound:cascade case}
		The regret of the Algorithm \ref{algo:cascade case} for cascade case satisfies for any $\epsilon>0$,
		\begin{equation}
		\begin{split}
		R_\theta(T) \le &4 \sum_{i=1}^K \Delta_i(\theta) \max_{t\in [T]} \set{c_i(\theta, \epsilon, \eta(t)) \log(t)} \\
		&+  10 \log(KT^2) \sum_{i \in \hat{V}^e} \frac{\Delta_i(\theta)}{\hat{p}_{i}^e} \\
		&+ 2 \beta\left(4 \sum_{i=1}^K \max_{t\in [T]} \set{c_i(\theta, \epsilon, \eta(t)) \log(t)} + K\right) \\
		&+ 4 \sum_{s=0}^T \exp\left(-\frac{\beta(s)\epsilon^2}{2K}\right) + 15K \,,\\
		\end{split}
		\end{equation}
		where
		\begin{align*}
		c_i(\theta, \epsilon, \eta) = \\
		\sup \big\{c_i: \ &c \in S_{\opt}(\LP(\theta', \eta)) \text{ and } \abs{\theta_j'-\theta_j} \le \epsilon, ~~\forall j \in [K]\big\}\,.
		\end{align*}

		Assume $\beta(n)=o(n)$ and $\sum_{s=0}^{\infty} \exp\left(-\frac{\beta(s) \epsilon^2}{2K}\right) < \infty$ for any $\epsilon>0$. Then for any $\theta$ such that $c(\theta)$ is unique, 
		\begin{align}
		\limsup_{T \to \infty} R_\theta(T) / \log(T) \le 4 \inf_{c \in C'(\theta)} \ip{ c, \Delta(\theta) }
		\end{align}
		holds with probability at least $1-\delta$ for any $\delta>0$.
		\end{theorem}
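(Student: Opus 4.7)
The plan is to mimic the regret decomposition used for Theorem~\ref{thm:upper bound:uniform case}, but with two additional ingredients to handle the cascade feedback: (i) the linear program $\LP_t$ is built from the approximated path probabilities encoded by $P_t$ rather than from the true $p'_{ij}$, so I must control how much $S_{\opt}(\LP_t)$ deviates from the lower-bound set $C'(\theta)$; and (ii) the observation count $m_j(t)$ now concentrates around $M_j'(t)=\sum_i N_i(t) (P_t)_{ij}$, which itself depends on $t$ through the tolerance $\eta(t)$. First I would split the $T$ rounds according to which of the four branches of Algorithm~\ref{algo:cascade case} fires, and bound the contribution of each branch separately.

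For the graph-realization branch (line~\ref{alg:cascade:explore p}), I would show by a Chernoff bound, analogous to the argument used for line~\ref{alg:uniform:explore p} in the uniform case, that the event $\{m_j(t-1)<M_j'(t-1)/2\}$ can occur at most $O(\log(KT^2)/\hat p_i^e)$ times for each $i\in \hat V^e$, because on this event $i_t$ is a node with $(P_t)_{i_t j}\ge\frac12\max_{i'}(P_t)_{i'j}\ge\hat p_{i_t}^e/2$ by the definition of $\hat p_i^e$; this produces the second summand of the finite-time bound. For the exploitation branch (lines~\ref{alg:cascade:if good condition}--\ref{alg:cascade:not increase Ne}), I would observe that on the high-probability event $\|\hat\theta_t-\theta\|_\infty\le\epsilon$ the best-arm index $i_1(\hat\theta_t)$ coincides with $i_1(\theta)=1$, so exploitation rounds incur zero instantaneous regret apart from an $O(K)$ burn-in coming from rounds before this event holds stably.

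The meat of the argument is the two exploration branches. For the forced-exploration branch (lines~\ref{alg:cascade:if accurate theta}--\ref{alg:cascade:accurate theta, increase Ne}), I would argue that whenever it fires the trigger condition guarantees that after $N^e$ total exploration rounds every arm $j$ has expected observation count at least $\beta(N^e)/K$; combined with sub-Gaussianity of $\hat\theta_t$ this yields $\PP{\|\hat\theta_t-\theta\|_\infty>\epsilon}\le 2K\exp(-\beta(N^e)\epsilon^2/(2K))$, and summing over $s$ produces the $4\sum_{s=0}^T\exp(-\beta(s)\epsilon^2/(2K))$ term, while the total number of such forced rounds is controlled by the subadditivity and monotonicity of $\beta$, giving the $2\beta(4\sum_i\max_t\{c_i(\theta,\epsilon,\eta(t))\log t\}+K)$ term. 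For the LP-exploration branch (line~\ref{alg:cascade:explore C}), on the good event $\|\hat\theta_t-\theta\|_\infty\le\epsilon$ every optimal solution of $\LP_t=\LP(\hat\theta_{t-1},\eta(t))$ has $i$-th coordinate bounded by $c_i(\theta,\epsilon,\eta(t))$, so arm $i$ is selected by this rule at most $16\,c_i(\theta,\epsilon,\eta(t))\log t$ times, which gives the first summand.

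The main obstacle is the asymptotic step: I need to deduce that $\sup_{t\le T}c_i(\theta,\epsilon,\eta(t))$ converges to $c_i(\theta)$, the $i$-th coordinate of the unique optimum of the $C'(\theta)$-program, as first $T\to\infty$ and then $\epsilon\to 0$. Here I would use that (a) $\eta(t)\to 0$ by assumption so the entries of $P_t$ lie within $\eta(t)/2$ of $p'_{ij}$, (b) $\hat\theta_t\to\theta$ on the high-probability event maintained by the forced-exploration branch, and (c) uniqueness of $c(\theta)$ together with standard continuity of linear programs with respect to their data to upgrade set-convergence of feasible regions to convergence of optima, i.e.\ $\limsup_t S_{\opt}(\LP_t)\subseteq \argmin_{c\in C'(\theta)}\ip{c,\Delta(\theta)}$. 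Plugging this back into the finite-time bound, using $\beta(n)=o(n)$ to kill the $\beta$ term after division by $\log T$ and $\sum_s\exp(-\beta(s)\epsilon^2/(2K))<\infty$ to kill the tail sum, yields the asymptotic ratio $4\inf_{c\in C'(\theta)}\ip{c,\Delta(\theta)}$. The high-probability $1-\delta$ statement follows because the only place probabilistic slack is used is the concentration event $\|\hat\theta_t-\theta\|_\infty\le\epsilon$, which can be made to hold simultaneously for all large $t$ with probability at least $1-\delta$ by a Borel--Cantelli argument combined with the forced-exploration lower bound on observation counts.
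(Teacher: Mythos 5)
Your decomposition into the four branches, and the bounds you assign to the realization branch, the forced-exploration branch, and the LP-exploration branch, track the paper's argument (which itself reuses the machinery of Theorem~\ref{thm:upper bound:uniform case}). The genuine gap is in your asymptotic step~(c), where you invoke ``standard continuity of linear programs with respect to their data'' to pass from $S_{\opt}(\LP_t)$ to the optimum over $C'(\theta)$. The standard perturbation results (e.g.\ Dontchev--Rockafellar \S3C.5, which the paper uses elsewhere) give Lipschitz continuity of the optimal set only under perturbations of the right-hand side $b$ and the objective $c$; here the noise sits in the constraint matrix $P$ itself, both because $\abs{P_{ij}-p'_{ij}}\le \eta(t)/2$ and because entries with $p'_{ij}\le\eta(t)$ are truncated to zero. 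The paper explicitly flags this as the main technical difficulty of the cascade case (the optimal solution of ``minimize $x$ subject to $ax\ge 1$, $x\ge 0$'' is $1/a$, which standard tools do not control uniformly in $a$) and resolves it with a bespoke robustness result, Lemma~\ref{lem:noisy lp}: for the specific structure at hand (non-negative data, perturbation of a strictly positive entry $A(i,j)$), a feasible point of the perturbed program can be repaired into a feasible point of the original by inflating the coordinate $x_j$ by a factor $1+\epsilon/A(i,j)$, which yields Pompeiu--Hausdorff Lipschitz continuity of $S_{\opt}$. Your proof as written has no substitute for this lemma, and without it the claim $\lim_{t\to\infty}c_i(\theta,\epsilon,\eta(t))=c_i(\theta,\epsilon)$ is unsupported. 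You also need the observation that once $\eta(t)<\min_{ij:\,p'_{ij}>0}p'_{ij}$ the truncation no longer kills any truly nonzero entry, so that only the small-perturbation regime of Lemma~\ref{lem:noisy lp} is ever invoked; your step~(a) gestures at this but does not separate the two effects.

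A secondary, more minor imprecision: in the exploitation branch you argue that $\norm{\hat\theta_t-\theta}_\infty\le\epsilon$ forces $i_1(\hat\theta_t)=i_1(\theta)$, which requires $\epsilon<\Delta_2/2$ and is not assumed (the bound must hold for every $\epsilon>0$). The paper instead uses the feasibility condition $\frac{N(t-1)}{16\log(t-1)}\in S_{\feas}(\LP_t)$ together with $\cB_{t-1}^c$ to lower-bound the observation counts $m_j(t-1)$ by $\frac{16}{\Delta_j(\hat\theta_{t-1})^2}\log(t-1)$, so that the adaptive confidence radius $\sqrt{2\log(t-1)/m_j(t-1)}$ is at most half the empirical gap, which is what actually forces $i_t=i_1(\theta)$ on that branch.
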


		The result depends on the robustness of the linear programming problems. The $P$ matrix in the LP problem \eqref{eq:lp} is noisy, which is much different from one-step case and the case of deterministic graphs where the noise is only on $\theta'$. See discussions in the next section. The full proof is put in 
		\ifsup
		Section \ref{sec:proof of cascade case}.
		\else
		\cite{li2019stochastic}. %supplementary materials.
		\fi

	\subsection{Discussions}
	\label{sec:discussions}

		The assumptions on the reward distributions are mainly used to ensure that the learning algorithms are able to differentiate them in the worst case (or the regret lower bound). The Gaussian distribution, Bernoulli distribution and common continuous random distribution on a common bounded interval like Beta distribution all satisfy the requirements.

		The assumption that the reward distribution can be represented by its mean is commonly adopted in bandit literature. Since there is always gap between a continuous distribution with its discrete empirical estimate and the reward only cares about the mean, previous works hardly choose to estimate the real distribution but mainly choose to estimate the mean. The real mean can be well analysed by constructing a confidence interval around the sample mean.

		%on the Gaussian reward distributions can be generalized to any type of sub-Gaussian distributions with a $1$-dimensional mean parameter such that the $\KL$-divergence between distributions of different parameters is finite and is continuous in the $1$-dimensional mean parameter.
		%Then the term $\Delta_i^2(\theta) / 2$ in the definition of $C(\theta)$ (see Eq.\eqref{eq:define C(theta)}) is replaced by $\KL(r(\theta_i), r(\theta_{i_1(\theta)}))$ accordingly where $r(\cdot)$ is the corresponding reward distribution with mean parameter. 

		The term $O\left(\log(T) \sum_{i=1}^K \frac{\Delta_i(\theta)}{p}\right)$ in the regret bound for one-step uniform case (same for other two cases) is due to the gap between the realizations and the expectations of the probabilistic graphs. Such a term can be removed in the asymptotic sense with high probability based on a different proof. With high probability, the connection between the realizations and the expectations of the probabilistic graphs can be guaranteed for large enough $T$, so the realizations of the probabilistic graphs are good enough and no regret would be caused from line \ref{alg:uniform:if explore p} - \ref{alg:uniform:explore p, not increase Ne} of Algorithm \ref{algo:uniform case} for large enough $T$. If we remove the high probability condition, such a $1/p$ term remains in the asymptotic sense.
		Such $1/p$ term also appears in the regret $O(\sqrt{T/p})$ of \cite{kocak2016online} on Erd\"os-R\'enyi random graphs in adversarial setting, as compared with adversarial case on deterministic graphs. 
		It is not clear whether this $1/p$ term represents hindsight difficulty between the probabilistic graphs and deterministic graphs. This would be an interesting future direction.

		The terms $\{p_i^e: i \in [K]\}$ in the one-step general case describes the minimal exploration probabilities to observe every action. For each $i \in [K]$, $p_i^e = \max_{i'} p_{i'j}$ for some $j$, that is $p_{ij}$ is the largest live probabilities among all incoming edges for some $j$. These terms represent the problem complexities for the underlying probabilistic graph. When all $p_{ij}$ are equal to $p$, $p_i^e = p$.

		The term ${p_i^e}'$ in the cascade case is usually larger than $p_i^e$ since it takes the same operations on the connection probabilities of incoming paths which are larger than live probabilities of incoming edges. The term $\hat{p}_i^e$ is an estimation satisfying $\hat{p}_i^e \ge {p_{i'}^e}' / 2$ for some $i'$.

		Next we discuss the difference in proof of the cascade case. If the noise of the linear programming problems is on the $b$ vector in \eqref{eq:lp}, then by the standard results in statistics \cite[\S3C.5]{dontchev2009implicit}, the resulting optimal solution sets are Lipschitz continuous. The property of Lipschitz continuity is essential since actions are selected according to the optimal solution of a noisy LP problem (line \ref{alg:cascade:explore C}) and we need to guarantee this kind of selections is safe. The noise on $\Delta$ vector in \eqref{eq:lp} is also easy to deal with by considering the dual problem. However, it is much different if the noise is on the $P$ matrix. For example, consider the LP problem that minimizes $x$ over all $a x \ge 1$ and $x \ge 0$ with parameter $a>0$. The optimal solution $x^\ast = 1/a$ is not Lipschitz continuous with respect to $a$. So the standard statistical tools could not apply here. We derive a novel property of the Lipschitz continuity when there is noise on $P$ for our specific $P$ matrix. 

		Last we would like to stress that our regret bounds are the first gap-dependent bounds even under the one-step uniform case, which contains the simple case of Erd\"os-R\'enyi random graph feedback. The previous works on Erd\"os-R\'enyi random graphs study gap free bound, no matter in the stochastic setting or the adversarial setting.

\section{Conclusion and Future Work}

	We are the first to formalize the setting of stochastic online learning with probabilistic feedback graph. We derive asymptotic lower bounds for both one-step and cascade cases. The regret bounds of our designed algorithms match the lower bounds with high probability. 

	This framework is new and we only provide asymptotic lower bounds and finite-time problem-dependent upper bounds. Finite-time lower bounds and minimax upper/lower bounds are all interesting future directions. Deriving Bayesian regret bounds is also an interesting topic.

\section*{Acknowledgement}

	Thank Houshuang Chen for help on the experiments.

\bibliographystyle{aaai}
\bibliography{ref}

\ifsup
\newpage
\onecolumn
\appendix
%!TEX root =  main.tex

\section{Proofs of the Upper Bounds in One-Step Triggering}
\label{sec:proofs of upper bound one step}

\begin{proof}[of Theorem \ref{thm:upper bound:uniform case}]

	Define events
	\begin{align*}
	\cA_t &=\set{M_j(t) < 10 \log\left( K t^2 \right), \quad \text{ for some } j \in [K] }\\
	\cB_t &=\set{m_j(t) < M_j(t) /2, \quad \text{ for some } j \in [K] } \\
	\cC_t &= \set{ \abs{\hat{\theta}_{t, i} - \theta_i} \ge \sqrt{\frac{2 \log(t)}{m_i(t)}}, \quad \forall i \in [K] }\\
	\cD_t &= \set{ \frac{N(t)}{16\log(t)} \in C(\hat{\theta}_{t}) }\\
	\cE_t &= \set{ M_j(t) < 2 \ \beta(N^e(t)) \ / \ K, \quad \text{ for some } j \in [K] }\\
	\cF_t &= \set{ \abs{\hat{\theta}_{t,i} - \theta_i} \le \epsilon, \quad \text{ for any } i \in [K] }
	\end{align*}

	\paragraph{Bound the regret under $\cB$}

	Note
	\begin{align}
	\sum_{t=1}^T \EE{\Delta_{i_t}(\theta) \bOne{\cB_{t-1}}} &\le \sum_{t=1}^T \EE{\Delta_{i_t}(\theta) \bOne{\cB_{t-1}, \cA_{t-1}^c}} + \sum_{t=1}^T \EE{\Delta_{i_t}(\theta) \bOne{\cB_{t-1}, \cA_{t-1}}} \notag \\
	&\le \Delta_{\max}(\theta) \pi^2 / 6 + \sum_{t=1}^T \EE{\Delta_{i_t}(\theta) \bOne{N_{i_t}(t-1) < \frac{10}{p} \log(Kt^2)}} \notag \\
	&\le \Delta_{\max}(\theta) \pi^2 / 6 + \sum_{i=1}^K \frac{10 \Delta_i(\theta)}{p} \log(KT^2) \label{eq:regret5}
	\end{align}
	where the first term is by Lemma \ref{lem:half bernoulli}.

	\paragraph{Bound the regret under $\cC$}
	\begin{align}
	\sum_{t=1}^T \EE{\Delta_{i_t}(\theta) \bOne{\cC_{t-1}}} \le 2K\Delta_{\max}(\theta) \label{eq:regret2}
	\end{align}

	Then it remains to bound $\sum_{t=1}^T \EE{\Delta_{i_t}(\theta) \bOne{\cB_{t-1}^c, \cC_{t-1}^c}}$.

	\paragraph{Bound the regret under $\cD$}

	Suppose $\cD_{t-1}$ and $\cB_{t-1}^c$, $\cC_{t-1}^c$ hold. Then $\sum_{i \in V^{\iin}(j)} N_i(t-1) p \ge \frac{32}{\Delta_j(\hat{\theta}_{t-1})^2} \log(t-1)$ for any $j \ne i_1(\hat{\theta}_{t-1})$ and $\sum_{i \in V^{\iin}(j)} N_i(t-1) p \ge \frac{32}{\Delta_{i_2(\hat{\theta}_{t-1})}(\hat{\theta}_{t-1})^2} \log(t-1)$ for $j = i_1(\hat{\theta}_{t-1})$. Or equivalently
	\begin{align*}
	&M_j(t-1) \ge \frac{32}{\Delta_j(\hat{\theta}_{t-1})^2} \log(t-1) \quad \text{ for } j \ne i_1(\hat{\theta}_{t-1})\,,\\ %\ge 16 \log(K(t-1)^2)
	&M_j(t-1) \ge \frac{32}{\Delta_{i_2(\hat{\theta}_{t-1})}^2(\hat{\theta}_{t-1})} \log(t-1) \quad \text{ for } j = i_1(\hat{\theta}_{t-1})\,.
	\end{align*}
	On $\cB_{t-1}^c$,
	\begin{align*}
	&m_j(t-1) \ge \frac{16}{\Delta_j(\hat{\theta}_{t-1})^2} \log(t-1) \quad \text{ for } j \ne i_1(\hat{\theta}_{t-1})\,,\\ 
	&m_j(t-1) \ge \frac{16}{\Delta_{i_2(\hat{\theta}_{t-1})}^2(\hat{\theta}_{t-1})} \log(t-1) \quad \text{ for } j = i_1(\hat{\theta}_{t-1})\,.
	\end{align*}
	Then
	\begin{align}
	\sum_{t=1}^T \EE{\Delta_{i_t}(\theta) \bOne{\cB_{t-1}^c, \cC_{t-1}^c}, \cD_{t-1}} = 0 \label{eq:regret3}
	\end{align}
	since
	\begin{align*}
	\theta_{i_1(\hat{\theta}_{t-1})} \ge \hat{\theta}_{t-1,i_1(\hat{\theta}_t)} - \sqrt{\frac{2\log(t-1)}{m_{i_1(\hat{\theta}_{t-1})}(t-1)}} &\ge \hat{\theta}_{t-1,i_1(\hat{\theta}_{t-1})} - \frac{\Delta_{i_2(\hat{\theta}_{t-1})}(\hat{\theta}_{t-1})}{2}\\
	&\ge \hat{\theta}_{t-1, i} + \frac{\Delta_{i}(\hat{\theta}_{t-1})}{2} \ge \theta_i
	\end{align*}
	thus $i_t = i_1(\hat{\theta}_{t-1}) = i_1(\theta)$. 

	Thus it remains to bound $\sum_{t=1}^T \EE{\Delta_{i_t}(\theta) \bOne{\cB_{t-1}^c, \cC_{t-1}^c, \cD_{t-1}^c}}$.

	\paragraph{Bound the regret under $\cB^c, \cC^c, \cD^c$}

	Similar to \cite[Proposition 17]{wu2015online} where the statement $\sum_{i \in V^\iin(j)} N_{ij} \ge \beta(s) / K$ is replaced by $M_j(t) \ge \beta(s) / K$,
	\begin{align*}
	\sum_{t=K}^T \bOne{\cB_{t-1}^c, \cD_{t-1}^c, \cE_{t-1}} \le 1 + \beta\left(\sum_{t=K+1}^T \bOne{\cB_{t-1}^c, \cD_{t-1}^c}\right)\,.
	\end{align*}
	Then
	\begin{equation}
	\label{eq:aux1}
	\begin{split}
	&\sum_{t=K+1}^T \bOne{\cB_{t-1}^c, \cC_{t-1}^c, \cD_{t-1}^c, \cE_{t-1}} \\
	&\le 2 + \sum_{t=K+1}^T\bOne{\cC_{t-1}} + \sum_{t=K+1}^T \bOne{\cB_{t-1}^c, \cC_{t-1}^c, \cD_{t-1}^c, \cE_{t-1}^c, \cF_{t-1}^c} + 2\beta\left(\sum_{t=K+1}^n \bOne{\cB_{t-1}^c, \cC_{t-1}^c, \cD_{t-1}^c, \cE_{t-1}^c, \cF_{t-1}}\right)\,.
	\end{split}
	\end{equation}

	Next by \cite[Lemma 19]{wu2015online},
	\begin{align}
	&\sum_{t=1}^T \EE{\Delta_{i_t}(\theta) \bOne{\cB_{t-1}^c, \cC_{t-1}^c, \cD_{t-1}^c, \cE_{t-1}^c, \cF_{t-1}^c}} \le \sum_{s=0}^T 2\exp\left(-\frac{\beta(s)\epsilon^2}{2K}\right)\,, \label{eq:regret6}\\
	&\sum_{t=1}^T \bOne{\cB_{t-1}^c, \cC_{t-1}^c, \cD_{t-1}^c, \cE_{t-1}^c, \cF_{t-1}} \le K + 4\sum_{i=1}^K c_i(\theta, \epsilon) \log(T) \label{eq:aux2}\,, \\
	&\sum_{t=1}^T \EE{\Delta_{i_t}(\theta) \bOne{\cB_{t-1}^c, \cC_{t-1}^c, \cD_{t-1}^c, \cE_{t-1}^c, \cF_{t-1}}} \le K + 4\sum_{i=1}^K c_i(\theta, \epsilon) \Delta_i(\theta) \log(T)\,. \label{eq:regret7}
	\end{align}

	Thus by \eqref{eq:aux1}, \eqref{eq:regret2}, \eqref{eq:regret6} and \eqref{eq:aux2},
	\begin{align}
	&\sum_{t=1}^T \EE{\Delta_{i_t}(\theta)\bOne{\cB_{t-1}^c, \cD_{t-1}^c, \cE_{t-1}}} \notag\\
	\le &\sum_{t=1}^T \EE{\Delta_{i_t}(\theta) \bOne{\cC_{t-1}}} + K\Delta_{\max}(\theta) + \sum_{t=K+1}^T \EE{\Delta_{i_t}(\theta)\bOne{\cB_{t-1}^c, \cC_{t-1}^c, \cD_{t-1}^c, \cE_{t-1}}} \notag\\
	\le &2K\Delta_{\max}(\theta) + K\Delta_{\max}(\theta) + 2 + 2K\Delta_{\max}(\theta) + \sum_{s=0}^T 2\exp\left(-\frac{\beta(s)\epsilon^2}{2K}\right) + 2\beta(1 + 4 c_i(\theta, \epsilon) \log(T)) \notag\\
	\le &2 + 5K\Delta_{\max}(\theta) + 2 \sum_{s=0}^T \exp\left(-\frac{\beta(s)\epsilon^2}{2K}\right) + 2\beta\left(K + 4 \sum_{i=1}^K c_i(\theta, \epsilon) \log(T)\right)\,. \label{eq:regret8}
	\end{align}

	Putting \eqref{eq:regret5}, \eqref{eq:regret2}, \eqref{eq:regret3}, \eqref{eq:regret6}, \eqref{eq:regret7}, \eqref{eq:regret8} together, the regret satisfies
	\begin{align*}
	R_{\theta}(T) \le &3 + K + \left(7K + \frac{\pi^2}{3}\right) \Delta_{\max}(\theta) + 4 \sum_{s=0}^T \exp\left(-\frac{\beta(s)\epsilon^2}{2K}\right) + 10 \sum_{i=1}^K \frac{\Delta_i(\theta)}{p} \log(KT^2) \\
	&+ 2\beta\left(K + 4 \sum_{i=1}^K c_i(\theta, \epsilon) \log(T)\right) + 4\sum_{i=1}^K c_i(\theta, \epsilon) \Delta_i(\theta) \log(T)\,.
	\end{align*}

	Next prove the asymptotic behavior of the regret upper bound. 
	\begin{claim}
	$M_j(t) \to \infty$ as $t \to \infty$ for any $j \in [K]$.
	\end{claim}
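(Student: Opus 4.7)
The natural approach is proof by contradiction. Suppose $M_{j_0}(t)$ is bounded in $t$ for some $j_0 \in [K]$; since $M_{j_0}(\cdot)$ is non-decreasing in $t$, the limit $L_{j_0} := \lim_t M_{j_0}(t)$ is finite, and this forces a (random) time $T_0$ after which no arm in $V^{\iin}(j_0)$ is ever played, i.e.\ $i_t \notin V^{\iin}(j_0)$ for every $t > T_0$. The plan is to rule out each of the four branches of Algorithm~\ref{algo:uniform case} for $t > T_0$.

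First I eliminate the forced-exploration branches. The standing assumption $\sum_s \exp(-\beta(s)\epsilon^2/(2K)) < \infty$ forces $\beta(s) \to \infty$. Hence if $N^e(t) \to \infty$, then $2\beta(N^e(t-1))/K$ eventually exceeds $L_{j_0}$, triggering line~\ref{alg:uniform:if accurate theta} with index $j_0$ and playing a parent of $j_0$, contradicting $t > T_0$. Consequently $N^e(t)$ stays bounded, so both lines~\ref{alg:uniform:if accurate theta}--\ref{alg:uniform:accurate theta, increase Ne} and lines~\ref{alg:uniform:explore C}--\ref{alg:uniform:explore C, increase Ne} fire only finitely often, and for all large $t$ only lines~\ref{alg:uniform:if explore p}--\ref{alg:uniform:explore p, not increase Ne} or lines~\ref{alg:uniform:if good condition}--\ref{alg:uniform:not increase Ne} can fire.

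Next I rule out the exploitation branch. The membership $N(t-1)/(16\log(t-1)) \in C(\hat\theta_{t-1})$ combined with the constraint for $j_0$ in the definition of $C(\cdot)$ implies $M_{j_0}(t-1) \ge 16\log(t-1)/\KL(\hat\theta_{t-1,j_0}, \hat\theta_{t-1,1})$. Since $V^{\iin}(j_0)$ is played only finitely often, $\hat\theta_{t,j_0}$ is eventually a constant $\bar\theta_{j_0}$; and either $\hat\theta_{t,1}$ is eventually constant as well, or $N_{V^{\iin}(1)}(t) \to \infty$ and sub-Gaussian concentration drives $\hat\theta_{t,1} \to \theta_1$ almost surely. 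In either case the $\KL$ term stays bounded a.s., so the right-hand side diverges, contradicting $M_{j_0}(t-1) \le L_{j_0}$. (The degenerate case where $C(\hat\theta_{t-1})$ is empty is simpler --- the condition fails outright.) Hence the exploitation branch fires only finitely often.

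So eventually only line~\ref{alg:uniform:explore p} can fire, each time selecting $i_t \in V^{\iin}(j)$ for some trigger index $j \neq j_0$. By pigeonhole, some fixed $i^* \notin V^{\iin}(j_0)$ is played infinitely often, giving $N_{i^*}(t) \to \infty$ and therefore $M_j(t) \to \infty$ for every $j$ with $i^* \in V^{\iin}(j)$. For each such $j$, applying the strong law of large numbers to the i.i.d. Bernoulli($p$) edge-realization indicators yields $m_j(t)/M_j(t) \to 1$ a.s., so the trigger condition $m_j < M_j/2$ eventually fails for that $j$. Iterating over the finitely many arms, the set of surviving trigger indices must consist of $j$'s with bounded $M_j$, whose parents are then in turn played only finitely often --- contradicting the assumption that line~\ref{alg:uniform:explore p} fires infinitely often. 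This contradiction closes the argument. The main obstacle I anticipate is formalizing cleanly this last induction/pigeonhole step (tracking which arms accumulate unbounded $N_i$ and which trigger indices remain valid) together with a careful almost-sure application of the SLLN that is uniform over the finitely many relevant $j$'s.
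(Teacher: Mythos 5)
Your proof is correct and takes essentially the same route as the paper's: assume some $M_{j_0}(t)$ stays bounded, rule out the exploitation and the two $N^e$-incrementing branches in turn (using that $\beta(N^e)\to\infty$ would force a parent of $j_0$ to be replayed), and conclude that the first branch would have to fire forever, which is impossible. You are in fact more explicit than the paper on the two steps it leaves implicit --- why the $\KL$/gap terms in the $C(\hat\theta_{t-1})$ constraints stay bounded a.s., and the final pigeonhole-plus-concentration argument showing the condition $m_j < M_j/2$ cannot trigger indefinitely --- so no changes are needed.
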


	Suppose not. There exists $j \in [K]$ such that $M_j(t)$, or $N_i(t)$ for all $i \in V^\iin(j)$, stops increasing when $t \ge T_1$ for some $T_1 > 0$. Then the condition on line \ref{alg:uniform:if good condition} is not satisfied when $t \ge T_2 \ge T_1$ for some $T_2>0$. By the condition on line \ref{alg:uniform:if accurate theta}, $N^e(t)$ also stops increasing and the condition on line \ref{alg:uniform:if accurate theta} for any $j'$ is not satisfied any more when $t \ge T_3 \ge T_2$ for some $T_3>0$. Also line \ref{alg:uniform:explore C, increase Ne} will not be performed since $N^e(t)$ stops increasing. Therefore the condition on line \ref{alg:uniform:if explore p} always holds, which is impossible.

	For any $\delta \in (0,1)$, the probability that the condition on line \ref{alg:uniform:if explore p} does not hold when $M_j(t) > 10 \log\frac{K}{\delta}$ is at least $1-\delta/K$. There exists $T_4>0$ such that when $t \ge T_4$, $M_j(t)>10 \log\frac{K}{\delta}$ for any $j$ since $M_j(t) \to \infty$. Then with probability at least $1-\delta$, line \ref{alg:uniform:explore p}-\ref{alg:uniform:explore p, not increase Ne} are not called any more. The events $\cA_t$ is modified by $\cA_t' = \set{M_j(t) < 10 \log\frac{K}{\delta} \text{ for some } j \in [K]}$ and \eqref{eq:regret5} is replaced by $\sum_{t=1}^T \EE{\Delta_{i_t}(\theta) \bOne{N_{i_t}(t-1) < \frac{10}{p} \log\frac{K}{\delta}}} \le \sum_{i=1}^K \frac{10 \Delta_i(\theta)}{p} \log(K/\delta)$. All other parts stay the same. Then the regret satisfies
	\begin{align*}
	R_{\theta}(T) \le &3 + K + \left(7K + \frac{\pi^2}{3}\right) \Delta_{\max}(\theta) + 4 \sum_{s=0}^T \exp\left(-\frac{\beta(s)\epsilon^2}{2K}\right) + 10 \sum_{i=1}^K \frac{\Delta_i(\theta)}{p} \log(K/\delta) \\
	&+ 2\beta\left(K + 4 \sum_{i=1}^K c_i(\theta, \epsilon) \log(T)\right) + 4\sum_{i=1}^K c_i(\theta, \epsilon) \Delta_i(\theta) \log(T)\,.
	\end{align*}

	For any $\eta>0$, there exists an $\epsilon = \epsilon(\theta) > 0$ such that the distance between the optimal solution set of $c(\theta)$ and $c(\theta')$ for any $\theta'$ such that $\abs{\theta_i'-\theta_i} \le \epsilon$ for all $i \in [K]$ is at most $\eta$. Here the distance is Pompeiu-Hausdorff distance of sets. This is because the Lipschitz continuity of the optimal set mapping (see \cite[\S3C.5]{dontchev2009implicit}) and the duality of linear programming problems. Since $c(\theta)$ is unique, $c_i(\theta, \epsilon)$ is upper bounded by $c_i(\theta) + \eta$. Then divide $R_{\theta}(T)$ by $\log(T)$ and let $T$ go to $\infty$,
	\begin{align}
	\limsup_{T \to \infty} \frac{R_\theta(T)}{\log(T)} \le 4 \inf_{c \in C(\theta)} \ip{ c, \Delta(\theta) } + \eta \sum_{i=1}^K \Delta_i(\theta)\,.
	\end{align}
	For instance, $\eta$ can be chosen as $\ip{c(\theta), \Delta(\theta)} / \sum_{i=1}^K \Delta_i(\theta)$.

\end{proof}

The proof of Theorem \ref{thm:upper bound:general case} is similar to Theorem \ref{thm:upper bound:uniform case} by modifying \eqref{eq:regret5} with
\begin{align}
\sum_{t=1}^T \EE{\Delta_{i_t}(\theta) \bOne{\cB_{t-1}}} &\le \sum_{t=1}^T \EE{\Delta_{i_t}(\theta) \bOne{\cB_{t-1}, \cA_{t-1}^c}} + \sum_{t=1}^T \EE{\Delta_{i_t}(\theta) \bOne{\cB_{t-1}, \cA_{t-1}}} \notag \\
&\le \Delta_{\max}(\theta) \pi^2 / 6 + \sum_{t=1}^T \EE{\Delta_{i_t}(\theta) \bOne{i_t \in V^e, N_{i_t}(t-1) < \frac{10}{p_{i_t}^e} \log(Kt^2)}} \notag \\
&\le \Delta_{\max}(\theta) \pi^2 / 6 + \sum_{i \in V^e} \frac{10 \Delta_i(\theta)}{p_{i}^e} \log(KT^2) \,.
\end{align}

% \begin{align*}
% \cA_t' &=\set{N_{i}(t) < \frac{10}{p_{i,\min}} \log\left( K t^2 \right), \quad \text{ for some } i \in [K] }\,,\\
% \cB_t' &=\set{n_{ij} < N_i(t) p/2, \quad \text{ for some } i,j \in [K] }\,, \\
% % \cE_t' &= \set{ n_j(t) < \beta(N^e(t)) \ / \ K, \quad \text{ for some } j \in [K] }\,.
% \end{align*}
% Note that $\cB_t$ and $\cE_t$ infer $n_i < \beta(N^e(t))/K$ in previous proof. The proof of other parts follow similarly.

\section{Proof of Theorem \ref{thm:upper bound:cascade case}}
\label{sec:proof of cascade case}

First we prove a useful lemma on the robustness of linear programming problem where the coefficient matrix is of a specific form.

\begin{lemma}
	\label{lem:noisy lp}
	Denote the linear programming problem of the form
	\begin{align}
	    \text{ minimize } \ip{c, x} \text{ over all } x \in \RR^n \text{ satisfying } Ax \ge b \text{ and } x \ge 0
	\end{align}
	by $\LP(A,b,c)$ where $A \in \RR^{n \times n}, b \in \RR^n, c \in \RR^n$ and all entries in $A,b,c$ are non-negative. Let the feasible set mapping, the optimal value mapping and the optimal set mapping be 
	\begin{align*}
	&S_{\feas}(A;b) = \set{x \mid Ax \ge b, x \ge 0}\\
	&S_{\val}(A;b,c) = \inf_x \set{\ip{c,x} \mid x \in S_{\feas}(A,b)}\\
	&S_{\opt}(A;b,c) = \set{x \in S_{\feas}(A,b) \mid \ip{c,x} = S_{\val}(A,b,c)}
	\end{align*}
	respectively. 
	Note that $S_{\val}(A,b,c)$ is always finite with the $A,b,c$ of positive (or even non-negative) entries. 
	% Assume $A(i,j) \le 1$ and for each $i$, there exists $j_i$ such that $A(i, j_i) = 1$.

	Fix a pair $(i,j)$, assume $A(i,j) > 0$. Let $A' = A$ except $A'(i,j) = A(i,j) + \epsilon$. Then
	\begin{align}
	d_H(S_{\opt}(A), S_{\opt}(A')) \le \alpha \abs{\epsilon}
	\end{align}
	for some $\alpha$ depending on $A,b,c$ and $i,j$.
\end{lemma}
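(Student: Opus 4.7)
The plan is to reduce a single-entry perturbation of the constraint matrix to a bounded perturbation of the right-hand side, where classical LP sensitivity theory already provides Lipschitz continuity of the optimal set map. The key algebraic identity is $A'x - Ax = \epsilon\, x_j\, e_i$, so the constraint $A'x \ge b$ is equivalent to $Ax \ge b - \epsilon x_j e_i$: a perturbation of $A(i,j)$ acts on the effective right-hand side with magnitude $|\epsilon|\,|x_j|$. Hence, if one can bound $x_j$ by a constant $M$ uniformly over the optimal solutions of $\LP(A',b,c)$ for $A'$ in a neighborhood of $A$, the induced $b$-perturbation is of order $|\epsilon|$, and the conclusion follows from off-the-shelf results.

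The proof would then proceed in three steps. First, for $|\epsilon| \le A(i,j)/2$ the perturbed entry $A'(i,j) \ge A(i,j)/2 > 0$ stays positive and $A'$ still has non-negative entries, so $\LP(A',b,c)$ remains well-posed with optimal value $V(A')$ continuous in $\epsilon$. Second, derive a uniform bound $\tilde x_j \le M$ for any $\tilde x \in S_{\opt}(A')$: when $c_j > 0$ this is immediate from $c_j \tilde x_j \le \ip{c, \tilde x} = V(A') \le 2V(A)$; when $c_j = 0$ one either restricts attention to the lexicographically minimal optimal vertex or passes to the dual, transferring the perturbation to a dual constraint whose cost is the non-negative entry $b_i$, yielding a finite $M = M(A,b,c,i,j)$. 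Third, for any $\tilde x \in S_{\opt}(A')$ one has $\tilde x \in S_{\opt}(A,\, b - \epsilon \tilde x_j e_i,\, c)$; invoking the $b$-Lipschitz estimate with constant $L = L(A,b,c)$ from \cite[\S3C.5]{dontchev2009implicit} produces a point of $S_{\opt}(A)$ within distance $L\,|\epsilon|\,M$ of $\tilde x$. Swapping the roles of $A$ and $A'$ gives the Hausdorff bound with $\alpha = LM$, depending only on $A, b, c, i, j$.

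The hard part is the uniform bound in the second step. The toy example $\min x$ subject to $ax \ge 1$, $x \ge 0$ discussed in the paper, for which $x^*(a) = 1/a$ blows up as $a \to 0$, shows that any Lipschitz constant must absorb local curvature; the assumption $A(i,j) > 0$ is exactly what keeps us bounded away from this singularity, and the non-negativity of $A, b, c$ is used throughout to keep all perturbed problems well-behaved and to ensure one-sided control on $V(A')$. The case $c_j = 0$ is the subtler branch, because then $x_j$ is unconstrained by the objective and must be pinned down through the constraint geometry (or, equivalently, via duality, where the perturbation lands in a column whose dual cost $b_i$ is non-negative). Once $M$ is in hand, the remainder is a clean reduction to the classical $b$-perturbation theory, which is exactly the form the cascade-case analysis needs.
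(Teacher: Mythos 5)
Your reduction of the matrix perturbation to a right-hand-side perturbation breaks at the step where you claim $\tilde x \in S_{\opt}(A,\, b-\epsilon \tilde x_j e_i,\, c)$ for $\tilde x\in S_{\opt}(A',b,c)$. The identity $A'x\ge b\iff Ax\ge b-\epsilon x_j e_i$ only shows that $\tilde x$ is \emph{feasible} for the $b$-perturbed problem; it need not be optimal, because the feasible set $\set{x\ge 0: Ax\ge b-\epsilon\tilde x_j e_i}$ also contains points $y$ with $y_j<\tilde x_j$ that are infeasible for $\LP(A',b,c)$, and such a $y$ can be strictly cheaper. Concretely, take $n=2$, $c=(1,1)$, $b=(1,0)$, first row of $A$ equal to $(1,\,1.05)$ and second row zero, and perturb $A(1,1)$ by $\epsilon=0.1$: then $\tilde x=(1/1.1,\,0)$, while $S_{\opt}(A,\,b-\epsilon\tilde x_1 e_1,\,c)$ is the single point $(0,\,10/(11\cdot 1.05))$, at distance about $1.26$ from $\tilde x$ --- far larger than the $L M\abs{\epsilon}\le 0.1$ your argument would certify (here the $b$-Lipschitz constant of $S_{\opt}(A,\cdot,c)$ is below $1$ and $\tilde x_1\le 1$). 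What survives of the reduction is only a value estimate, $S_{\val}(A')\ge S_{\val}(A,\,b-\epsilon M e_i,\,c)\ge S_{\val}(A)-O(\epsilon)$; converting ``feasible with near-optimal value'' into ``close to $S_{\opt}(A)$'' requires the Lipschitz continuity of the sublevel-set mapping $t\mapsto\set{x\in S_{\feas}:\ip{c,x}\le t}$ (a Hoffman-type error bound, \cite[Theorem 3C.3]{dontchev2009implicit}), which your outline never invokes. That is precisely how the paper's proof is organized: it bounds $\abs{S_{\val}(A)-S_{\val}(A')}$ directly by rescaling the $j$-th coordinate of an optimal point to restore feasibility (an $O(\epsilon)$ cost because $A(i,j)>0$), and then sandwiches each optimal set inside a sublevel set of the other problem at a nearby threshold.

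The second gap is the $c_j=0$ branch, which you flag as subtle but whose proposed fixes cannot yield a Hausdorff bound. When $c_j=0$ the set $S_{\opt}(A')$ is typically unbounded in the $e_j$ direction (e.g.\ minimizing $x_1$ subject to $x_1+x_2\ge 1$, $x\ge 0$ gives $S_{\opt}=\set{(0,x_2):x_2\ge 1}$), so no uniform bound $M$ on $\tilde x_j$ over all of $S_{\opt}(A')$ exists; restricting to a lexicographically minimal vertex controls one distinguished point, whereas $d_H$ quantifies over the entire set, and the dual route only relocates the difficulty. Since in the paper's application $c=\Delta(\theta')$ vanishes at the best arm, this case cannot be set aside. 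The one piece of your plan that does hold up is the bound $c_j\tilde x_j\le S_{\val}(A')\le 2S_{\val}(A)$ when $c_j>0$ (indeed for $\epsilon>0$ one has $S_{\val}(A')\le S_{\val}(A)$ outright, since the feasible region only grows), but on its own it does not close the argument.
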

\begin{proof}
	By \cite[Theorem 3C.3]{dontchev2009implicit}, the mapping
	\begin{align}
	\label{eq:G mapping}
	G: t \mapsto \set{x \in S_{\feas}(A, b) \mid \ip{c,x} \le t}
	\end{align}
	is Lipschitz continuous. Recall that the distance on sets is Pompeiu-Hausdorff distance $d_H$.

	First assume $\epsilon>0$. Then
	\begin{align*}
	S_{\feas}(A) \subset S_{\feas}(A'),\quad S_{\val}(A) \ge S_{\val}(A')\,.
	\end{align*}
	If $S_{\opt}(A') \subset S_{\feas}(A)$, then $S_{\val}(A) = S_{\val}(A')$. Suppose not and let $x' \in S_{\opt}(A') \setminus S_{\feas}(A)$.
	Then
	\begin{align*}
	0 < b - \sum_{j'=1}^n A(i,j') x_{j'}' \le \epsilon x_j'\,.
	\end{align*}
	Let $x = x'$ except $x_j = x_j' + \frac{\epsilon}{A(i,j)} x_j'$, then $x \in S_{\feas}(A)$ and
	\begin{align*}
	S_{\val}(A) \le \ip{c, x} = \ip{c, x'} + \frac{\epsilon}{A(i,j)} c_j x_j' \le S_{\val}(A') + \alpha_1 \epsilon
	\end{align*}
	for some $\alpha_1$ depending on $A,b,c$ and $i,j$. Thus
	% \zheng{$\beta_1$, $\beta_2$, $\beta_3$ are not defined. $B_1$ is also not defined. Also, $B_1$ seems to be a set, and we might also need to define the set addition.}
	\begin{align*}
	S_{\opt}(A') \subset G(S_{\val}(A') + \alpha_1 \epsilon) + \alpha_2 \epsilon B_1 \subset G(S_{\val}(A)) + \alpha_3 \epsilon B_1 = S_{\opt}(A) + \alpha_3 \epsilon B_1
	\end{align*}
	for some $\alpha_2, \alpha_3$ depending on $A,b,c$,
	where $B_1$ is the unit ball in $\RR^n$ and the second inequality is due to the Lipshitz continuity of $G$ in \eqref{eq:G mapping}. Also
	\begin{align*}
	S_{\opt}(A) \subset S_{\feas}(A) \subset S_{\feas}(A') \subset G'(S_{\val}(A)) \overset{(*)}{\subset} G'(S_{\val}(A')) + \beta_4 \epsilon B_1 = S_{\opt}(A') + \beta_4 \epsilon B_1
	\end{align*}
	where (*) is by by the Lipshitz continuity of $G':t \mapsto \set{x \in S_{\feas}(A', b) \mid \ip{c,x} \le t}$.

	The case of $\epsilon < 0$ follows similarly.
\end{proof}

\begin{proof}[of Theorem \ref{thm:upper bound:cascade case}]

	The finite-time regret is similar to the previous proof. The main difference is on the bound for line \ref{alg:cascade:explore C}. In particular, the results of \eqref{eq:aux2} and \eqref{eq:regret7} are changed to be
	\begin{align*}
	&\sum_{t=1}^T \bOne{\cB_{t-1}^c, \cC_{t-1}^c, \cD_{t-1}^c, \cE_{t-1}^c, \cF_{t-1}} \le K + 4 \sum_{i=1}^K \max_{t \in [T]} \set{c_i(\theta, \epsilon, \eta(t)) \log(t)}\,, \\
	&\sum_{t=1}^T \EE{\Delta_{i_t}(\theta) \bOne{\cB_{t-1}^c, \cC_{t-1}^c, \cD_{t-1}^c, \cE_{t-1}^c, \cF_{t-1}}} \le K + 4 \sum_{i=1}^K \Delta_i(\theta) \max_{t \in [T]} \set{c_i(\theta, \epsilon, \eta(t)) \log(t)} 
	\end{align*}
	since $c_i(\theta, \epsilon, \eta(t))$ can bound $S_{\opt}(\LP_t)$.
	The proof of other parts follow the proof of \ref{thm:upper bound:general case} similarly.

	By the non-increasing property of $\eta(t)$ whose limit is $0$, $\eta(t)$ would be smaller than $\min_{i j: p_{ij}'>0} p_{ij}'$ when $t \ge T_1$ for some $T_1>0$. Then $P_t$ only has small noise on the nonzero entries of $P' = (p_{ij}')_{ij}$. By Lemma \ref{lem:noisy lp}, $S_{\opt}(\theta', \eta(t))$ is Lipschitz continuous in $\eta(t)$ for any $\theta'$. Thus
	\begin{align*}
	\lim_{t \to \infty} c_i(\theta, \epsilon, \eta(t)) = c_i(\theta, \epsilon)\,.
	\end{align*}
	The remaining discussion on $\epsilon$ is similar.
\end{proof}

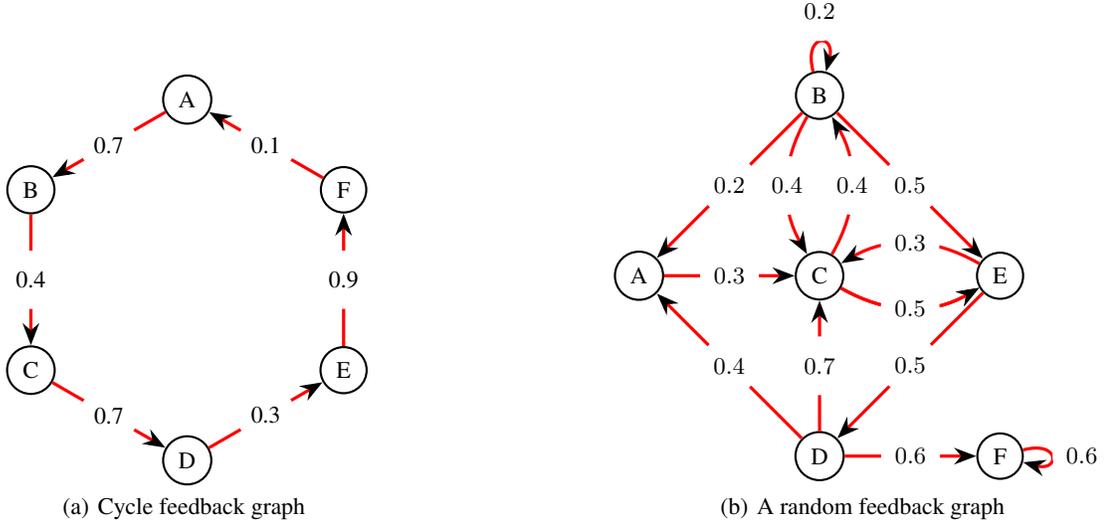
\begin{figure}[t]
		\label{fig:graphs}
		\centering

		\subfigure[Cycle feedback graph]{
			\label{fig:graphs a}
    		\begin{minipage}[t]{0.5\linewidth}
        	\centering
        	\begin{tikzpicture}[scale=0.8,font=\small]
				\def\r{3}
				\def\c{7}
				\begin{scope}[every node/.style={circle,thick,draw,, minimum size=0.5cm}]
				\foreach \x/\y in {90/A,150/B,210/C,270/D,330/E,30/F}{
					\node (\y) at ({\c+\r*cos(\x)},{\c+\r*sin(\x)}) {\y};
				}
				\end{scope}
				\begin{scope}[>={Stealth[black]},
				every node/.style={fill=white,circle},
				every edge/.style={draw=red,very thick}]
				\path [->] (A) edge node {0.7} (B)
				(B) edge node {0.4} (C)
				(C) edge node {0.7} (D)
				(D) edge node {0.3} (E)
				(E) edge node {0.9} (F)
				(F) edge node {0.1} (A);
				\end{scope}
			\end{tikzpicture}
        	% \caption{Cycle feedback graph on $6$ nodes}
    		\end{minipage}%
		}%
		\subfigure[A random feedback graph]{
			\label{fig:graphs b}
    		\begin{minipage}[t]{0.5\linewidth}
        	\centering
        	\begin{tikzpicture}[scale=0.8,font=\small]
				\def\r{3}
				\def\c{4}
				\begin{scope}[every node/.style={circle,thick,draw, minimum size=0.5cm}]
				\node (A) at (\c-\r,\c) {A};
				\node (B) at (\c,\c+\r) {B};
				\node (C) at (\c,\c) {C};
				\node (D) at (\c,\c-\r) {D};
				\node (E) at (\c+\r,\c) {E};
				\node (F) at (\c+\r,\c-\r) {F} ;
				\end{scope}

				\begin{scope}[>={Stealth[black]},
				every node/.style={fill=white,circle},
				every edge/.style={draw=red,very thick}]
				\path [->] (A) edge node {$0.3$} (C);
				\path [->] (B) edge node {$0.2$} (A);
				\path [->,draw=red,very thick] (B) to[loop above] node {$0.2$} (B);
				\path [->,draw=red,very thick] (B) to[bend right] node {$0.4$} (C);
				\path [->] (B) edge node {$0.5$} (E);
				\path [->,draw=red,very thick] (C) to[bend right] node {$0.4$} (B);
				\path[->,draw=red,very thick] (C) to[bend right] node {$0.5$} (E);
				\path [->] (D) edge node {$0.6$} (F);
				\path [->] (D) edge node {$0.4$} (A);
				\path [->] (D) edge node {$0.7$} (C);
				\path [->,draw=red,very thick] (E) to[bend right] node {$0.3$} (C); 
				\path [->] (E) edge node {$0.5$} (D); 
				\path [->,draw=red,very thick] (F) to[loop right] node {$0.6$} (F); 
				\path[->,draw=red,very thick] (C) to[bend right] node {$0.5$} (E);

				\end{scope}
			\end{tikzpicture}
			% \caption{A random feedback graph on $6$ nodes}
    		\end{minipage}
		}%
		\caption{Two feedback graphs on $6$ nodes}
	\end{figure}

	\begin{figure}[t]
		\centering
  		\includegraphics[width=0.45\textwidth]{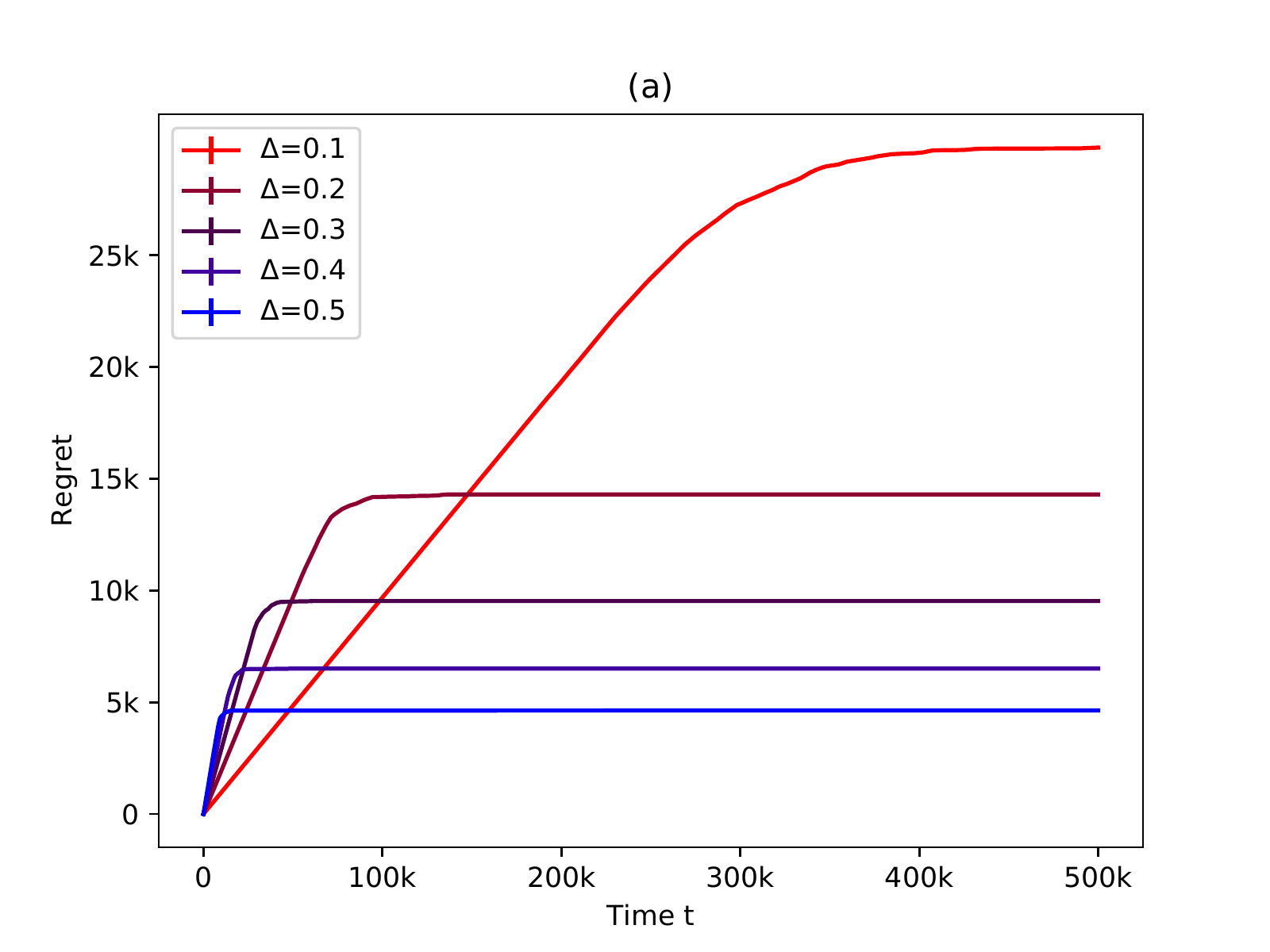}
  		\includegraphics[width=0.45\textwidth]{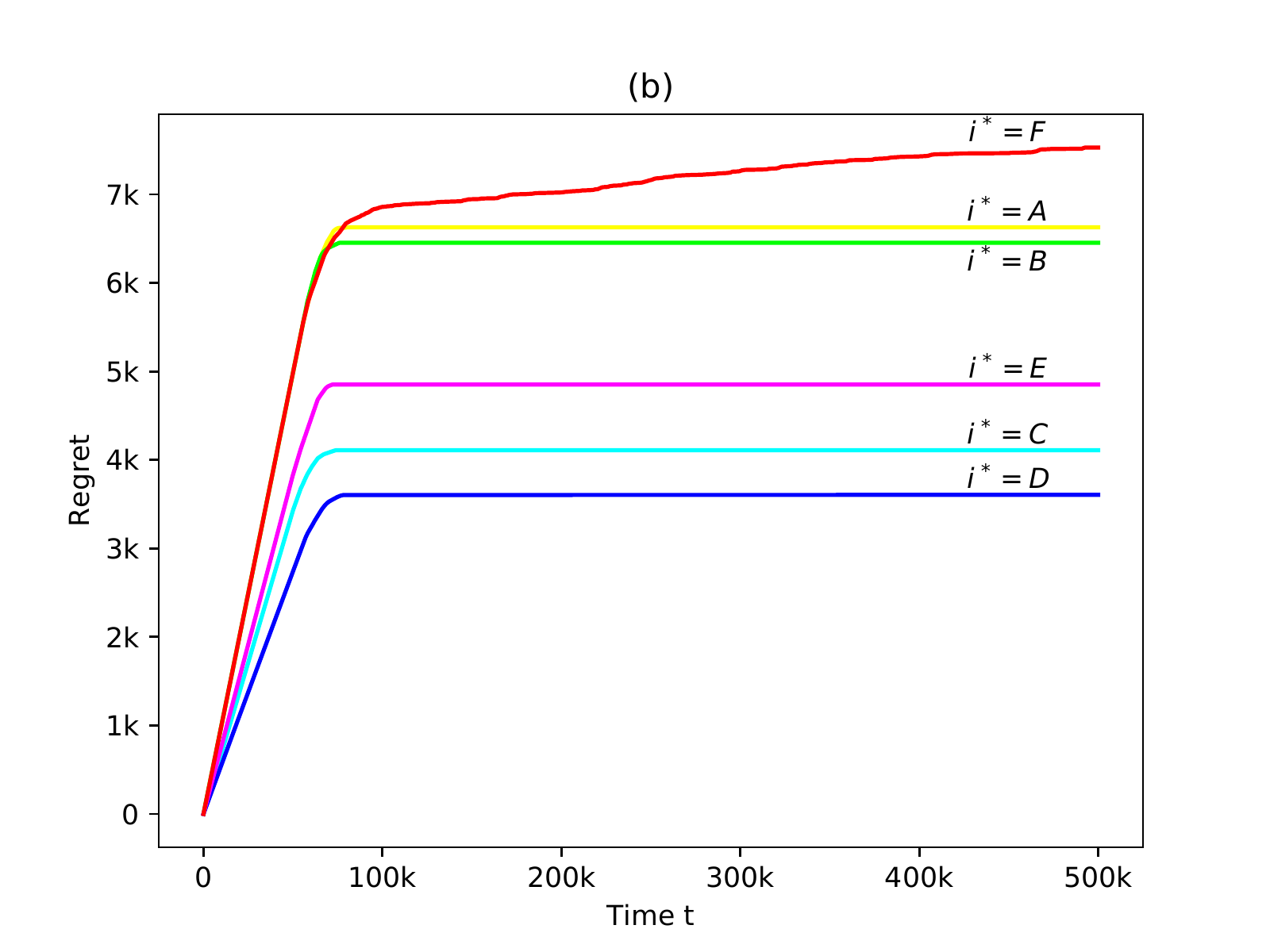}
 		\caption{Regrets on two feedback graphs}
 		\label{fig:results}
	\end{figure}

\section{Technical Lemmas}

\begin{lemma}[Hoeffding's Inequality \cite{hoeffding1963probability}]
\label{lem:chernoff bound}
Let $X_1, \ldots, X_n$ be independent random variable with common support $[0, 1]$. Let $\bar{X} = \frac{1}{n} \sum_{i=1}^n X_i$ and $\EE{\bar{X}} = \mu$. Then for all $a \ge 0$,
\begin{align*}
\PP{\bar{X} - \mu \ge a} \le  \exp(- 2 n a^2),\quad \PP{\bar{X} - \mu \le -a} \le  \exp(- 2 n a^2).
\end{align*}
\end{lemma}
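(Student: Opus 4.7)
The plan is to prove this via the classical Chernoff-bound method. First I would fix $s>0$ and apply the exponential Markov inequality to the centered sum: since the $X_i$ are independent,
\begin{align*}
\PP{\bar{X} - \mu \ge a}
= \PP{\textstyle e^{s \sum_i (X_i - \EE{X_i})} \ge e^{sna}}
\le e^{-sna} \prod_{i=1}^n \EE{e^{s(X_i - \EE{X_i})}}.
\end{align*}
This reduces the problem to controlling the moment generating function (MGF) of each centered summand $Y_i = X_i - \EE{X_i}$, which takes values in an interval of length $1$.

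The core technical step is an auxiliary lemma (Hoeffding's MGF bound): if $Y$ is a zero-mean random variable supported on $[\alpha, \beta]$, then $\EE{e^{sY}} \le \exp\!\bigl(s^2 (\beta-\alpha)^2 / 8\bigr)$. I would prove this by using convexity of $y \mapsto e^{sy}$ to write
\begin{align*}
e^{sy} \le \tfrac{\beta - y}{\beta - \alpha} e^{s\alpha} + \tfrac{y - \alpha}{\beta - \alpha} e^{s\beta} \qquad \text{for } y \in [\alpha, \beta],
\end{align*}
taking expectations (the linear term vanishes by $\EE{Y}=0$), and then bounding the logarithm of the resulting scalar function $\varphi(s)$ of $s$ via a second-order Taylor expansion of $\log \varphi$ around $s=0$, checking that $\varphi(0) = 1$, $\varphi'(0) = 0$, and $\sup_s (\log \varphi)''(s) \le (\beta-\alpha)^2/4$ by an arithmetic/geometric mean argument.

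Applying this bound with $\alpha = -\mu_i$, $\beta = 1 - \mu_i$ (so $\beta-\alpha = 1$) to each $Y_i$ yields $\EE{e^{sY_i}} \le e^{s^2/8}$, hence $\prod_i \EE{e^{sY_i}} \le e^{ns^2/8}$. Plugging back gives $\PP{\bar{X} - \mu \ge a} \le \exp(-sna + ns^2/8)$, and minimizing the right-hand side over $s>0$ (optimum at $s^\star = 4a$) produces the claimed bound $\exp(-2na^2)$. The lower-tail inequality follows by applying the identical argument to the variables $1 - X_i$, which are independent and also supported on $[0,1]$.

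The main obstacle is the sub-Gaussian MGF bound for bounded centered random variables with the sharp constant $(\beta-\alpha)^2/8$; the rest is a routine combination of independence, Markov's inequality, and one-dimensional optimization.
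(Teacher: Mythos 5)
The paper does not prove this lemma at all; it is imported verbatim from Hoeffding's 1963 paper as a standard technical tool, so there is no in-paper argument to compare against. Your proof is the classical Chernoff/Hoeffding argument (exponential Markov inequality, the MGF bound $\EE{e^{sY}} \le \exp(s^2(\beta-\alpha)^2/8)$ for a centered bounded variable via convexity and a second-order expansion of $\log\varphi$, then optimizing at $s^\star = 4a$ to get $\exp(-2na^2)$), and it is correct as outlined, with the lower tail correctly handled by passing to $1-X_i$.
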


\begin{lemma}[Bernstein's Inequality]
\label{lem:bernstein ineq}
Let $X_1, \ldots, X_n$ be independent zero-mean random variables. Suppose that $\abs{X_i} \le M$ almost surely for all $i$. Then for all $a \ge 0$,
\begin{align*}
\PP{\sum_{i=1}^n X_i \ge a} \le \exp\left(-\frac{a^2/2}{\sum_{i=1}^n \EE{X_i^2} + Ma/3} \right)\,.
\end{align*}
\end{lemma}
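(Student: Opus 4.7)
The plan is to prove this via the classical Chernoff method, bounding the moment generating function of each $X_i$ using both the variance and the almost-sure bound $M$, then optimizing the parameter $\lambda$. Concretely, for any $\lambda > 0$ Markov's inequality applied to $e^{\lambda S_n}$ with $S_n = \sum_{i=1}^n X_i$ gives
\begin{align*}
\PP{S_n \ge a} \le e^{-\lambda a}\, \EE{e^{\lambda S_n}} = e^{-\lambda a} \prod_{i=1}^n \EE{e^{\lambda X_i}},
\end{align*}
where the equality uses independence. So everything reduces to bounding each individual MGF.

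The main technical step is to bound $\EE{e^{\lambda X_i}}$. Expanding the exponential as a power series and using $\EE{X_i}=0$, I would write
\begin{align*}
\EE{e^{\lambda X_i}} = 1 + \sum_{k \ge 2} \frac{\lambda^k \EE{X_i^k}}{k!}.
\end{align*}
For $k \ge 2$, the almost-sure bound $|X_i|\le M$ yields $\EE{X_i^k} \le \EE{X_i^2} M^{k-2}$. Substituting and comparing the tail of the resulting series to a geometric series with ratio $\lambda M/3$ (valid for $\lambda M < 3$) gives the key estimate
\begin{align*}
\sum_{k \ge 2} \frac{\lambda^k M^{k-2}}{k!} \le \frac{\lambda^2/2}{1 - \lambda M/3},
\end{align*}
which I would verify by comparing factorials: $k! \ge 2\cdot 3^{k-2}$. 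Using $1+x \le e^x$ then yields
\begin{align*}
\EE{e^{\lambda X_i}} \le \exp\!\left( \frac{\lambda^2\,\EE{X_i^2}/2}{1 - \lambda M/3} \right).
\end{align*}

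Putting this into the Chernoff bound and writing $V = \sum_{i=1}^n \EE{X_i^2}$ gives, for every $\lambda \in (0, 3/M)$,
\begin{align*}
\PP{S_n \ge a} \le \exp\!\left( -\lambda a + \frac{\lambda^2 V / 2}{1 - \lambda M / 3} \right).
\end{align*}
The final step is to choose $\lambda$. I would pick $\lambda = a / (V + Ma/3)$, which lies in $(0, 3/M)$ and makes $1 - \lambda M/3 = V/(V + Ma/3)$, so that the exponent collapses to $-a^2/(2(V + Ma/3))$, delivering the stated bound. The only non-routine obstacle is the elementary but careful derivation of the series-to-geometric comparison, which is where the specific constant $1/3$ in the denominator originates; everything else is standard Chernoff bookkeeping.
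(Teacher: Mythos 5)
Your proof is correct. Note that the paper itself states this Bernstein inequality as a classical fact without giving any proof, so there is no internal argument to compare against; your Chernoff--MGF derivation is the standard textbook proof, and every step checks out: the moment bound $\abs{\EE{X_i^k}} \le M^{k-2}\EE{X_i^2}$ for $k \ge 2$, the factorial estimate $k! \ge 2\cdot 3^{k-2}$ giving the geometric comparison $\sum_{k\ge 2}\lambda^k M^{k-2}/k! \le \frac{\lambda^2/2}{1-\lambda M/3}$ for $\lambda M<3$, and the choice $\lambda = a/(V+Ma/3)$, which indeed satisfies $\lambda M/3 = (Ma/3)/(V+Ma/3) < 1$ and collapses the exponent to $-a^2/\left(2(V+Ma/3)\right)$. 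The only cosmetic remark is that the degenerate cases ($a=0$, where the claimed bound is trivially $1$, and $V=0$ with $a>0$, where all $X_i$ vanish almost surely and the probability is $0$) should be dispatched separately, since your optimal $\lambda$ sits on the boundary $3/M$ when $V=0$; this is routine and does not affect the validity of the argument.
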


\begin{lemma} 
\label{lem:half bernoulli}
Let $x_1, x_2, \ldots, x_t$ be independent Bernoulli random variables with mean $p_1, p_2, \ldots, p_t \in (0,1)$ respectively.
\begin{align*}
\PP{\sum_{s=1}^t x_s < \frac{1}{2}\sum_{s=1}^t p_s} \le \delta
\end{align*}
if $\sum_{s=1}^t p_s \ge 10 \log\left(\frac{1}{\delta}\right)$.
\end{lemma}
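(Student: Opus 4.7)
The plan is to invoke Bernstein's inequality (Lemma~\ref{lem:bernstein ineq}), since that tool has already been introduced and the tail probability being bounded is a standard lower-tail deviation by a constant fraction of the mean. Let $S = \sum_{s=1}^t x_s$ and $\mu = \sum_{s=1}^t p_s$. The event $\{S < \mu/2\}$ is $\{\mu - S > \mu/2\}$, so I would introduce the centred variables $Y_s = p_s - x_s$, each of which has mean zero, is bounded in absolute value by $1$, and has variance $p_s(1-p_s) \le p_s$. Independence is preserved, so Bernstein applies to $\sum_s Y_s$ directly.

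Applying Lemma~\ref{lem:bernstein ineq} with $a = \mu/2$, $M = 1$, and the variance bound $\sum_s \EE{Y_s^2} \le \mu$, I would obtain
\[
\PP{\mu - S \ge \mu/2} \;\le\; \exp\!\left(-\frac{(\mu/2)^2/2}{\mu + \mu/6}\right) \;=\; \exp\!\left(-\frac{3\mu}{28}\right).
\]
To conclude, I simply need $3\mu/28 \ge \log(1/\delta)$, i.e.\ $\mu \ge \tfrac{28}{3}\log(1/\delta) \approx 9.33\log(1/\delta)$, which is implied by the hypothesis $\mu \ge 10 \log(1/\delta)$. So the chosen constant $10$ is comfortably sufficient with a small slack.

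There is really no obstacle here; the only thing to be careful about is the direction of the tail (lower tail of $S$ becomes upper tail of $\mu - S$) and the fact that the variance term genuinely scales with $\mu$ rather than $t$, which is what lets the bound depend only on $\mu = \sum_s p_s$ and not on the individual $p_s$'s. If one preferred a slightly cleaner constant, the multiplicative Chernoff bound $\PP{S \le (1-\gamma)\mu} \le \exp(-\gamma^2 \mu / 2)$ with $\gamma = 1/2$ gives $\exp(-\mu/8)$ and requires only $\mu \ge 8\log(1/\delta)$, but the Bernstein route is a one-line substitution into a lemma already stated in the paper, so I would present it that way for self-containedness.
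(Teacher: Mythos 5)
Your proof is correct and follows exactly the paper's own route: centring via $\mu - S$, applying Bernstein's inequality with $M=1$, $a=\mu/2$, and the variance bound $\sum_s p_s(1-p_s)\le \mu$, which yields the exponent $-3\mu/28$ and hence the stated claim under $\mu \ge 10\log(1/\delta)$. Nothing to add.
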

\begin{proof}
\begin{align}
\PP{\sum_{s=1}^t x_s < \frac{1}{2}\sum_{s=1}^t p_i} &= \PP{\sum_{s=1}^t p_s - \sum_{s=1}^t x_s > \frac{1}{2}\sum_{s=1}^t p_s} \notag \\
&\le \exp \left( - \frac{(\sum_{s=1}^t p_s)^2/8}{\sum_{s=1}^t p_s(1-p_s) + (\sum_{s=1}^t p_s)/6} \right) \label{eq:cite bernstein}\\
&\le \exp \left( - \frac{(\sum_{s=1}^t p_s)^2/8}{\sum_{s=1}^t p_s + (\sum_{s=1}^t p_s)/6} \right)\notag\\
&\le \exp \left( - \frac{(\sum_{s=1}^t p_s)/8}{7/6} \right)\notag\\
&\le \delta \label{eq:half bernoulli last}
\end{align}
where \eqref{eq:cite bernstein} is by Bernstein's inequality (Lemma \ref{lem:bernstein ineq}) and \eqref{eq:half bernoulli last} holds when
\begin{align*}
\sum_{s=1}^t p_s \ge 10 \log\left(\frac{1}{\delta}\right)\,.
\end{align*}
\end{proof}

\section{Experiments}
	
	This section demonstrates two simple experiments for the cascade case with $6$ nodes and the reward random variables are Gaussian with unit variance. The first uses a cycle graph (see Figure \ref{fig:graphs a}) where the probabilities on the edges are generated randomly. We set the reward mean vector for the $6$ nodes $A,B,\ldots, F$ to be $\theta=(0.5+\Delta,0.5,\ldots,0.5)$. We run our Algorithm \ref{algo:cascade case} with different $\Delta$'s. The results are shown in Figure \ref{fig:results}(a) and each regret curve is averaged over $10$ random runs.

	The second uses a random graph (see Figure \ref{fig:graphs b}) where both the edges and the probabilities on the edges are generated randomly. We test our Algorithm \ref{algo:cascade case} for $6$ cases, each selects a best arm $i^\ast=O$ ($O$ can be $A,B,\ldots,F$), where the reward mean for $O$ is $0.6$ and the reward mean for others is $0.5$. The regret results are shown in Figure \ref{fig:results}(b) with each taking average of $10$ random runs.
\fi

\end{document}